\newtheorem{theorem}{Theorem}
\newtheorem{assumption}{Assumption}
\theoremstyle{example}
\newtheorem{lemma}{Lemma}
\newtheorem{remark}{Remark}
\theoremstyle{remark}
\def\P{{\mathbb P}}
\def\E{{\mathbb E}}
\def\R{{\mathbb R}}
\def\Z{{\mathbb Z}}
\def\G{{\mathcal G}}
\def\M{{\mathcal M}}
\def\S{{\mbox{sign}}}
\def\T{{ \mathrm{\scriptscriptstyle T} }}
\title{On Constructing Confidence Region for Model Parameters in Stochastic Gradient Descent via Batch Means}
\author{%
	Yi Zhu \\
	Northwestern University\\
	Jing Dong\\ 
	Columbia Business School\\
}
\date{}
\begin{document}
	\maketitle		
\begin{abstract}
		In this paper, we study a simple algorithm to construct asymptotically valid confidence regions for
		model parameters using the batch means method. 
		The main idea is to cancel out the covariance matrix which is hard/costly to estimate.
		In the process of developing the algorithm, we establish process-level functional central limit theorem for Polyak-Ruppert averaging based stochastic gradient descent estimators.
		We also extend the batch means method to accommodate more general batch size specifications.
	\end{abstract}
	
	\section{Introduction}
	
	Stochastic Gradient Descent and variants of it have been widely used in model-parameter estimation for either online learning or when data sizes are very large
	\cite{RobbinMonros:1951,polyak1992acceleration,Kingma:2015}. As the estimators we construct via stochastic gradient descent is random, it is desirable to be able to quantify the estimation errors incurred.
	While there is a rich literature studying convergence rate of the objective function or the parameter estimation errors based on stochastic gradient descent, e.g., \cite{zhang2004solving,agarwal2009information,Nemirovski:2009}, 
	much less is known about the statistical inference for true model parameters 
	\cite{Glynn:2002, Toulis:2017, Fang:2017,Chen:2018, Su:2018}. 
	Following the later line of work, in this paper, we propose a simple procedure to construct asymptotically valid confidence regions for model parameters
	based on a cancellation method known as the batch means. 
	The confidence region we construct is tight as it takes the covariance structure of the parameters into account.
	
	We consider the classic setting where the model parameters, $x^*$, can be characterized as the minimizer of a convex objective function, which is also known as the loss function. Specifically,
	\begin{equation}\label{eq:opt}
	x^*=\arg\min \left(H(x):=E[h(x,\zeta)]\right),
	\end{equation}
	where $h$ is a real-valued function, $x$ is a $d$-dimensional parameter, and $\zeta$ is a $d^{\prime}$-dimensional random vector.
	Stochastic gradient descent is an iterative algorithm to solve \eqref{eq:opt}. In its simplest form, the $t$-th iteration takes the form
	\[X_t=X_{t-1} - \gamma_t \nabla_x h(X_{t-1},\zeta_t),\]
	where $\nabla_x h$ is the gradient of $h$ with respect to $x$ and $\gamma_t$ is the step size.
	If we take $\bar X_t= t^{-1}\sum_{i=0}^{t-1}X_i$ as an estimator for $x^*$, then under certain regularity conditions, \cite{polyak1992acceleration} establish that
	\[t^{1/2}\left(\bar X_t - x^*\right)\Rightarrow N(0, \Sigma) \mbox{ as $t\rightarrow\infty$},\]
	where $\Rightarrow$ denotes convergence in distribution,  $N(0,\Sigma)$ denotes a Gaussian random vector with mean $0$ and covariance matrix $\Sigma$, and
	\[\Sigma=\nabla^2 H(x^*)^{-1} U \nabla^2 H(x^*)^{-1},\]
	where $\nabla^2 H(x^*)$ is the Hessian of $H$ at $x^*$, 
	and $U=E[\nabla_x h(x^*, \zeta) \nabla_x h(x^*, \zeta)^T]$. 
	If we know the value of $\Sigma$, then a natural way to construct the $95\%$ confidence region for $x^*$ is
	\[\hat R_t=\{x\in \R^d: t(\bar X_t - x)^T\Sigma^{-1}(\bar X_t-x)\leq \chi_{d,0.05}^2\},\]
	where $\chi_{d,0.05}^2$ is the $95\%$-quantile of the chi-squared distribution with $d$ degrees of freedom.
	The confidence region is asymptotically valid in the sense that
	$\lim_{t\rightarrow\infty} \textnormal{pr}(x^*\in \hat R_t)=0.95.$
	The main challenge here is that covariance matrix $\Sigma$ is unknown and it is very costly to construct consistent estimators of $\Sigma$ \cite{Chen:2018}.
	
	To address the challenge, we introduce a cancellation method, called the batch means, from the stochastic simulation literature \cite{schruben1983confidence,Glynn:1990,MunozG01}. 
	The main idea is to construct the statistics in a special way to cancel out the unknown covariance matrix.
	The method was introduced to deal with steady-state estimation problems, where we use the time average of the stochastic process as an estimator of the steady-state mean.
	Despite the elegance of the method, existing results in the literature do not allow us to apply it directly in the stochastic gradient descent setting.
	This is because in steady-state estimation problems, the stochastic process is time-homogeneous, while in stochastic gradient descent, if we view $\{X_t:t\geq 0\}$ as a stochastic process, the transition kernel is time-dependent due to the decreasing step sizes. 
	
	The main contribution of this paper is that we rigorously establish the validity of the batch means method in the stochastic gradient descent setting.
	This provides us with a simple way to construct asymptotically valid confidence regions for model parameters.
	The method utilizes the output of the stochastic gradient descent algorithm itself, and it does not require any modification to the underlying algorithm.
	We also extend the batch means method to allow more general batch size specifications and provide some guidance on how to select the batch sizes.
	Our analysis relies on the process-level convergence result for $\{\bar X_t:t\geq0\}$, which is stronger than the 
	large sample convergence result established in the literature.

	\section{Batch means method} \label{sec:main}
		Consider the case where $H(x)$ is strongly convex with a unique minimizer at $x^*$.
		We follow the Polyak-Ruppert averaging iteration, 
		\begin{equation}\label{eq:rec}
		X_t=X_{t-1}-\gamma_t \G(X_{t-1},\zeta_t),
		\end{equation}
		where $E[\G(X_{t-1},\zeta_t)|X_{t-1}]=\nabla H(X_{t-1})$ and $\gamma_t=at^{-r}$ for some $a>0$ and $r\in(1/2,1)$.
		The batch means method divides the stochastic gradient descent sample path $\{X_t: 0\leq t\leq T\}$ into $m$ non-overlapping batches, where the $i$-th batch is of size $b_i:=\lceil Tw_i\rceil$.
		Here, $m\in \Z_+$ with $m>d$, and $w=(w_1,\dots,w_m)\in \R_+^{m}$, where $\Z_+$ is the set of positive integers and $\R_+$ is the set of positive real numbers. $m$ and $w$ are the parameters for the batch means method.
		The method is asymptotically valid for a wide range of parameter specifications. As for the pre-limit performance, we will discuss how to ``fine-tune" these parameters in Section \ref{sec:size}.
		We define $\tau_i = \sum_{j=1}^{i}b_i$. 
		Then the $i$th batch contains $\{X_{\tau_{i-1}+1}, \dots, X_{\tau_i}\}$ and its batch mean is defined as 
		$\Xi_i=b_i^{-1}\sum_{t=\tau_{i-1}+1}^{\tau_i}X_t$.
		
		The basic idea of the batch means method is that for $T$ large enough, $\Xi_i$'s are approximately independent 
		$N(x^*, (1/b_i)\Sigma)$.
		Then we can construct $F$ type of statistics based on the $m$ batch means. In particular, we consider the statistics
		\begin{equation}\label{eq:stat}
		\Gamma_T = m(m-d)(d(m-1))^{-1}(\bar X_T - x^*)^TS_m^{-1}(T)(\bar X_T - x^*)
		\end{equation}
		where
		$\bar X_T:=T^{-1}\sum_{t=1}^{T}X_t ~~~\mbox{ and } ~~~ S_m(T):=(m-1)^{-1}\sum_{i=1}^{m}(\Xi_i-\bar X_T)(\Xi_i-\bar X_T)^T$
		How $\Gamma_T$ works will be made precise in Theorem \ref{th:main}.
		The actual procedure to construct the confidence region is summarized in the following Algorithm.
		
	
	\begin{algorithm}
		\caption{Construct a $100(1-\delta)\%$ confidence region for $x^{*}$} \label{alg:main}
		\begin{algorithmic}[1]
			\STATE \textbf{Input:} The SGD sample path $\{X_t: 0\leq t\leq T\}$, the number of batches $m$, the relative batch length parameter $w$
			\STATE  Find the appropriate scaling parameter $\alpha_m(\delta,w)$. 
			\STATE  Calculate the batch means $\Xi_i$ for $i=1,2,\dots,m$
			\STATE  Calculate $\bar X_T$ and $S_m(T)$
			\STATE \textbf{Output:} $R_T=\left\{x \in \mathbb{R}^d:  \frac{m(m-d)}{d(m-1)}(\bar X_T -x)^T S_m^{-1}(T)(\bar X_T -x) \leq \alpha_m(\delta,w)\right\}.$
		\end{algorithmic} 
	\end{algorithm}
	
	
		The confidence regime constructed in Algorithm 1 is asymptotic valid in the sense that if 
		the scaling parameter $\alpha_m(\delta,w)$ is properly chosen, then
		$\lim_{T\rightarrow\infty}\textnormal{pr}(x^*\in R_T)=1-\delta.$
		The key now is to calibrate the appropriate scaling parameter $\alpha_m(\delta,w)$.
		The value of $\alpha_m(\delta,w)$ is determined by the asymptotic behavior of $\Gamma_T$.
		Theorem \ref{th:main} characterizes the limiting distribution of $\Gamma_T$ as $T\rightarrow\infty$, and is the main result of this paper.
		Before we present the theorem, we first introduce a few assumptions, which are standard for the convergence analysis of Polyak-Ruppert averaging, e.g., \cite{Chen:2018, polyak1992acceleration}.
		We define $\Delta_t := X_t - x^*$, and $\xi_t=(\xi_t(1), \dots, \xi_t(d))$ as 
		\begin{equation} \label{eq:md}
		\xi_t:=\G(X_{t-1},\zeta_t)-\nabla H(X_{t-1}). 
		\end{equation} 
		
		\begin{assumption}\label{ass:main1}
			$H(x)$ is continuously differentiable and strongly convex with parameter $C>0$, i.e., for any $x$ and $y$
			$H(y)\geq H(x)-\nabla H(x)^T(y-x)+\frac{C}{2}\|y-x\|_2^2.$
			$\nabla H(x)$ is Lipschitz continuous with parameter $L>0$, i.e., for any $x$ and $y$,
			$\|\nabla H(x) - \nabla H(y)\|_2 \leq L\|x-y\|_2,$
			and $\nabla^2 H(x^*)$ exists.
		\end{assumption}
		
		\begin{assumption}\label{ass:main2}
			$(\xi_t: t \geq 1)$ is a martingale-difference process with respect to the filtration $\mathcal{F}=\{\mathcal{F}\}_{t\geq 1}$ generated by $(\zeta_t: t \geq 1)$, and it satisfies the following:\\
			\noindent{\bf 1.} The conditional covariance of $\xi_t$ has an expansion around $x=x^{*}$:
			$E[\xi_t \xi_t^T|\mathcal{F}_{t-1}] = U + r(\Delta_{t-1}),$
			for some positive definite matrix $U$, and there exit constants $S_1>0$ and $S_2>0$, such that for any $\Delta\in \mathbb{R}^d$,
			$ \|r(\Delta)\|_2 \leq S_1 \|\Delta\|_2 + S_2 \|\Delta\|^2_2 $.\\
			\noindent{\bf 2.} There exists $M\in(0,\infty)$, such that $\|\xi_t\| \leq M$ almost surely, $\forall t\geq 1$.
		\end{assumption}
		
		\begin{remark}\label{rem:main1}
			Assumption \ref{ass:main1} ensures the convergence of $\bar X_t$ to a unique global optimal $x^*$ \cite{polyak1992acceleration}. 
			Assumption \ref{ass:main2} provides sufficient conditions to establish the functional Central Limit Theorem (FCLT) for partial sums of $\xi_t$'s. 
		\end{remark}
		

		We define the function $g_m: C^d[0,1] \times \mathbb{R}^{m} \rightarrow \mathbb{R}^{d \times d}$ as
		\[g_m(x,w) =\frac{1}{m-1} \sum_{i=1}^m \left(\frac{x(c_i)-x(c_{i-1})}{c_i- c_{i-1}} -x(1)\right)\left(\frac{x(c_i)-x(c_{i-1})}{c_i- c_{i-1}} -x(1)\right)^T,\]
		where $c_0=0$ and $c_i=c_{i-1}+w_i$. We are now ready to introduce the main theorem.
		
		\begin{theorem} \label{th:main}
			Under Assumption \ref{ass:main1} and \ref{ass:main2}, for $\Gamma_T$ defined in \eqref{eq:stat}, when $m>d$ and $w>0$,
			\[ \Gamma_T \Rightarrow m(m-d)(d(m-1))^{-1} Z^{T} g_m(B, w)^{-1}Z \mbox{ as $T\rightarrow\infty$,}\] 
			where $Z$ is a standard $d$-dimensional Gaussian random vector, $B$ is standard $d$-dimensional Brownian motion, and $Z$ is independent of $g_m(B, w)$. Furthermore, if we set $\alpha_m(\delta,w)$ as the $(1-\delta)$-quantile of $m(m-d)(d(m-1))^{-1} Z^{T} g_m(B, w)^{-1}Z$, then
			\[\lim_{T\rightarrow\infty} \textnormal{pr} (x^*\in R_T)=1-\delta.\]
		\end{theorem}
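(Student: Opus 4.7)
The plan is to recast $\Gamma_T$ as a continuous functional of a single rescaled partial-sum process and then invoke a process-level central limit theorem. Define
\[
U_T(s) = T^{-1/2}\sum_{t=1}^{\lfloor Ts\rfloor}(X_t - x^*), \qquad s \in [0,1].
\]
The first and most delicate step is to establish the functional CLT
\[
U_T \Rightarrow \Sigma^{1/2} B \quad \mbox{in } D^d[0,1],
\]
where $B$ is a standard $d$-dimensional Brownian motion and $\Sigma = \nabla^2 H(x^*)^{-1} U \nabla^2 H(x^*)^{-1}$. I would prove this by following the Polyak--Ruppert linearization: Taylor-expanding $\nabla H(X_{t-1})$ around $x^*$ gives
\[
\Delta_t = (I - \gamma_t \nabla^2 H(x^*))\Delta_{t-1} - \gamma_t \xi_t + \gamma_t R_t,
\]
with $\|R_t\|$ controlled by $\|\Delta_{t-1}\|^2$ via the Lipschitz condition in Assumption~\ref{ass:main1}. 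A summation-by-parts identity then yields
\[
\nabla^2 H(x^*)\sum_{t=1}^{n}\Delta_{t-1} = -\sum_{t=1}^{n}\xi_t + \mathcal{E}_n,
\]
where $\mathcal{E}_n$ is a remainder whose $L^2$ norm is $o(\sqrt{n})$ uniformly in $n \leq T$. Applying the martingale functional CLT to $T^{-1/2}\sum \xi_t$ under Assumption~\ref{ass:main2} (using the covariance expansion and the bounded-noise condition), together with a uniform second-moment bound $E\|\Delta_t\|^2 = O(\gamma_t)$ to dispatch $\mathcal{E}_n$, yields the FCLT for $U_T$.

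With the FCLT in hand, both the global and batch averages become continuous readouts of $U_T$: $\sqrt{T}(\bar X_T - x^*) = U_T(1)$ and
\[
\sqrt{T}(\Xi_i - x^*) = (T/b_i)\bigl(U_T(\tau_i/T) - U_T(\tau_{i-1}/T)\bigr),
\]
while $b_i/T \to w_i$ and $\tau_i/T \to c_i$. The continuous mapping theorem then gives the joint weak convergence
\[
\bigl(\sqrt{T}(\bar X_T - x^*),\, T\, S_m(T)\bigr) \Rightarrow \bigl(\Sigma^{1/2} B(1),\, \Sigma^{1/2} g_m(B,w)\Sigma^{1/2}\bigr),
\]
pulling the nonrandom $\Sigma^{1/2}$ out of $g_m$ by linearity of the map $x \mapsto (x(c_i)-x(c_{i-1}))/w_i - x(1)$. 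Substituting into \eqref{eq:stat}, the $\Sigma^{1/2}$ factors cancel and yield
\[
\Gamma_T \Rightarrow \tfrac{m(m-d)}{d(m-1)}\, B(1)^{T} g_m(B,w)^{-1} B(1).
\]
Setting $Z = B(1)$ identifies $Z$ as standard $d$-dimensional Gaussian, and $g_m(B,w)$ is almost surely invertible when $m > d$ by the non-degeneracy of a Gaussian Wishart-type matrix.

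For the independence claim, write $\tilde\xi_i := (B(c_i)-B(c_{i-1}))/w_i - B(1)$. Since $(B(c_1),\dots,B(c_m),B(1))$ is jointly Gaussian, so is $(\tilde\xi_1,\dots,\tilde\xi_m,B(1))$. Using $\sum w_j = 1$, hence $c_m=1$, a direct computation gives $\mathrm{Cov}(\tilde\xi_i,B(1)) = ((c_i-c_{i-1})/w_i)I - I = 0$, so each $\tilde\xi_i$ is independent of $B(1)$, and $g_m(B,w) = (m-1)^{-1}\sum \tilde\xi_i\tilde\xi_i^T$ is therefore independent of $Z$. The coverage conclusion $\lim_{T\to\infty}\textnormal{pr}(x^*\in R_T) = 1-\delta$ follows by the definition of $\alpha_m(\delta,w)$ and continuity of the limiting distribution.

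The main obstacle is the first step: upgrading the marginal CLT of Polyak--Ruppert to a functional CLT on $D^d[0,1]$. The decaying step sizes $\gamma_t = at^{-r}$ render the iteration time-inhomogeneous, so partial sums over disjoint time blocks are not identically distributed and classical FCLTs for stationary sequences do not apply. Establishing tightness demands uniform control of $\mathcal{E}_n$ and of $E\|\Delta_t\|^2$ over $t \leq T$, which is where the restriction $r \in (1/2,1)$, the strong convexity of $H$, and the bounded-noise hypothesis in Assumption~\ref{ass:main2} play their essential roles.
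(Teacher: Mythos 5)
Your overall architecture coincides with the paper's: a process-level FCLT for the centered partial sums, continuous mapping to get joint convergence of $\sqrt{T}(\bar X_T-x^*)$ and $TS_m(T)$, cancellation of the square root of $\Sigma$, almost-sure invertibility of $g_m(B,w)$ for $m>d$, and independence of $B(1)$ from $g_m(B,w)$ via the Brownian-bridge covariance computation. The independence step and the cancellation step are correct as written. However, there is a genuine gap exactly where you flag the ``main obstacle'': you assert that the remainder $\mathcal{E}_n$ in the summation-by-parts identity has $L^2$ norm $o(\sqrt{n})$ ``uniformly in $n\le T$,'' but a pointwise-in-$n$ second-moment bound does not deliver what the FCLT needs, namely $\sup_{n\le T}\|\mathcal{E}_n\|=o_P(\sqrt{T})$. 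The troublesome piece of $\mathcal{E}_n$ is $\sum_{s=1}^{n}\phi_s^n\xi_s$, where $\phi_s^n$ is the deviation of the accumulated step-size products from $\nabla^2H(x^*)^{-1}$; because $\phi_s^n$ depends on the terminal index $n$, this sum is \emph{not} a martingale in $n$, so Doob's maximal inequality is unavailable and Chebyshev plus a union bound over $n\le T$ is too weak. The paper closes this hole with two dedicated tools: a multidimensional Azuma--Hoeffding inequality (Lemma \ref{boundmartdiff}) giving exponential concentration for each fixed $n$, and the bound $\sum_{s=0}^{n-1}\|\phi_s^n\|=O(n^{r})$ (Lemma \ref{lm:sgd_de}), so that the union bound over $n\le T$ contributes a factor $T$ that is beaten by $\exp(-cn^{1-r})$ precisely because $r<1$. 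Without an argument of this type your FCLT for $U_T$ is not established, and everything downstream rests on it.

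A secondary, smaller gap: the almost-sure invertibility of $g_m(B,w)$ is not simply ``non-degeneracy of a Wishart-type matrix,'' because the vectors $\tilde\xi_i=(B(c_i)-B(c_{i-1}))/w_i-B(1)$ satisfy the linear constraint $\sum_i w_i\tilde\xi_i=0$ and are therefore dependent; $g_m(B,w)$ has the form $(m-1)^{-1}NVN^T$ with $V$ of rank exactly $m-1$, and one must check (as in Lemma \ref{lm:h}) that $\mathrm{rank}(NVN^T)=\min(d,m-1)=d$ almost surely, which is where the hypothesis $m>d$ (rather than $m\ge d$) actually enters.
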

		
		

		We note from Theorem \ref{th:main} that the scaling parameter $\alpha_m(\delta,w)$ does not depend on the underline problem instances. It only depends on the batch means parameters $m$ and $w$. 
		In the special case of evenly-split batch size, i.e., $w_i=1/m$,
		\[m(m-d)(d(m-1))^{-1} Z^{T} g_m(B, w)^{-1}Z\]
		follows an $F$ distribution with $d$ and $m-d$ degrees of freedom.
		We will discuss a different splitting scheme in Section \ref{sec:size} and provide the corresponding scaling parameter table (Table \ref{tab:main0}).
		
	\section{Selection of the batch means parameters} \label{sec:size}
	The confidence region constructed using the batch means method is asymptotic valid regardless of our choice of $m$ and $w$, as long as $m>d$ and $w>0$.
	However, different $m$ and $w$ will affect the pre-limit performance of the procedure. In this section, we study how to choose the parameters for the batch means method.
	The analysis is divided into two parts. We first study for a fixed $m$, how to choose the batch sizes $w$. We then study how to choose $m$.
	
	The pre-limit performance is essentially determined by how close the distribution of
	\[\left((b_1/\sqrt{T})(\Xi_1 - x^*), \dots ,(b_m/\sqrt{T})(\Xi_m - x^*)\right)\] 
	is to 
	$\left(G(B(c_1)-B(c_0)), \dots, G(B(c_m)- B(c_{m-1}))\right)$.

	\subsection{Batch size}
	Note that the pre-limit $\Xi_i$'s are correlated while the limiting $(B(c_i)-B(c_{i-1}))$'s are uncorrelated.
	Thus, one important quantity we want to minimize is the correlation between $\Xi_i$ and $\Xi_{i+1}$.
	
	To understand the correlation between $\Xi_{i}$ and $\Xi_{i+1}$, we follow the arguments in \cite{Chen:2018}. 
	We first note that for $t$ large, $X_t$ is close to $x^*$.
	Thus,
	\[ \nabla H(X_{t-1}) \approx \nabla H(x^*) + \nabla ^2 H(x^*) (X_{t-1} - x^*) = A \Delta _{t-1}, \] 
	where $ A :=\nabla^2 H(x^*)$ and $\Delta_t = X_t - x^*$, and the equality follows as $\nabla H(x^*)=0$.
	Then by the recursion formula \eqref{eq:rec}, we have
	\[ \Delta_t \approx (I -\gamma_t A) \Delta_{t-1} +\gamma_t \xi_t, \] 
	where $I$ is the identity matrix and $\xi_t$ is defined in \eqref{eq:md}.
	This further indicates that for $i$ and $j$ large, the correlation between $ \Delta_i$  and $\Delta_j$ is approximately
	\[\prod ^{j-1}_{t=i} || I - \gamma_{t} A|| \approx \exp\left(-\lambda(A) \sum^{j-1}_{t=i} \gamma_{t}\right),\]
	where $\lambda(A)$ denote the smallest eigenvalue of $A$.
	With the goal of balancing the correlation between $\Xi_i$ and $\Xi_{i+1}$, 
	we can choose $w$ according to
	\[\min_{w}\max_i\exp\left(-\lambda(A)\sum_{t=\tau_{i-1}}^{\tau_i} \gamma_t\right).\]
	It is easy to see that the minimum is achieved when
	$\sum_{t=\tau_{i-1} +1}^{\tau_i} \gamma_t$'s are equal. 
	In this case, we can set
	\[\tau_i=\left(i/m\right)^{1/(1-r)}T.\]
	Note that for this specification of $\tau_i$'s, we have increasing batch sizes, i.e., $w_i$'s are increasing in $i$.
	This is similar to the batch size splitting rule proposed in \cite{Chen:2018}.
	For what follows, we shall refer to this specification as the ``increasing batch size" (IBS) allocation.
	The main difference between our method and the one in \cite{Chen:2018} is that
	the method in \cite{Chen:2018} requires sending $m$ to infinity as $T$ goes to infinity, 
	while our method holds $m$ fixed. 
	We will conduct more comparisons of the two methods in Section \ref{sec:sim}.
	
	Table \ref{tab:main0} provides some of the commonly used scaling parameters for IBS
	with different values of $d$ and $m$. As these quantiles are estimated using Monte Carlo simulation, we also provide 
	the corresponding $95\%$ confidence intervals.
	\begin{table}[htbp]
		\caption{$95\%$-quantile of $\frac{m(m-d)}{d(m-1)}Z^Th_m(B,w)^{-1}Z$ with IBS allocation} \label{tab:main0}
		\centering
		\begin{tabular}{c|c|c|c|c}\hline
			$d$ & $1$ & $2$ & $3$ & $4$    \\ \hline
			$m=10$  & $2.93\pm0.01$ & $2.92\pm0.01$ & $3.13\pm0.01$ & $3.50\pm0.01$  \\ \hline
			$m=20$ & $2.18\pm 0.01$ & $2.00\pm0.01$ & $1.95\pm0.01$ & $1.97\pm0.01$ \\ \hline
			$m=30$  & $1.91\pm0.01$ & $1.71\pm0.01$ & $1.64\pm0.01$ & $1.62\pm0.01$   \\ \hline
			$m=40$  & $1.76\pm0.01$ & $1.55\pm0.01$ & $1.47\pm0.01$ & $1.50\pm0.01$  \\ \hline
			$m=60$  & $1.58\pm0.01$ &$1.38\pm0.01$ &$1.29\pm0.01$ & $1.26\pm 0.01$  \\ \hline
			$m=100$  & $1.42\pm 0.01$&$1.21\pm0.01$ &$1.13\pm0.01$ & $1.09\pm0.01$  \\ \hline
			$m=120$  &$1.33\pm 0.01$ &$1.15\pm0.01$ &$1.07\pm0.01$ & $1.03\pm 0.01$ \\ \hline
			$m=150$  & $1.28\pm 0.01$& $1.09\pm 0.01$ & $1.01\pm 0.01$ & $0.97\pm0.01$ \\ \hline\hline
			$d$ & $10$ & $50$ & $80$ & $100$    \\ \hline
			$m=10$  & NA  & NA  & NA & NA  \\ \hline
			$m=20$  & $2.46\pm 0.01$ & NA & NA & NA \\ \hline
			$m=30$  & $1.74\pm 0.01$ & NA & NA & NA   \\ \hline
			$m=40$  & $1.48\pm 0.01$  & NA & NA & NA  \\ \hline
			$m=60$  & $1.24\pm 0.01$ & $2.49\pm 0.04$ & NA & NA \\ \hline
			$m=100$  & $1.02\pm 0.01$& $1.31 \pm 0.01$ &$1.81\pm 0.01$ & NA \\ \hline
			$m=120$  & $0.97\pm 0.01$& $1.18 \pm 0.01$ & $1.43 \pm 0.01$ & $1.81\pm 0.01 $\\ \hline
			$m=150$  & $0.91\pm 0.01$  & $1.07\pm 0.01$  & $1.22\pm 0.01$ & $1.35\pm 0.01 $  \\ \hline
		\end{tabular}
	\end{table}
	
	We next show some numerical experiments about different choices of batch sizes.
	We compare three different specifications: i) IBS, ii) even splitting (ES), 
	and iii) decreasing batch size (DBS) where we reverse the batch size specification of IBS.
	Table \ref{tab:main1} summarizes results. 
	
	For Table \ref{tab:main1} and subsequent numerical experiments, we focus on 
	two classes of examples: linear regression and logistic regression. 
	For linear regression, we write $b_i=x^{*T} a_i + \epsilon_i$ where $\epsilon_i$'s are iid $N(0,1)$.
	In this case, $\zeta=(a,b)$ and $h(x,\zeta)=(b-x^Ta)^2$. 
	For logistic regression, we consider $b_i\in\{-1,1\}$ with
	$\P(b_i=1|a_i)=(1+\exp(-x^{*T}a_i))^{-1}$. In this case $\zeta=(a,b)$ and 
	$h(x,\zeta)=\log(1+\exp(-b x^{T} a))$.
	When not specified,  the  true parameters $x^{*}$ is a $d$-dimensional vector linearly spaced between
	$0$ and $1$. We set the baseline number of iterations at $n:=10^5$. In all the examples, our goal is to achieve $95\%$ coverage rate.
	The estimated coverage rate is based on 1000 independent replications of the procedure. We also report the corresponding $95\%$
	confidence interval for the coverage rate.
	
	We observe from Table \ref{tab:main1} that as the number of iteration increases, 
	all three batch size specifications are approaching the correct coverage rate, $0.95$. 
	For a relatively small number of iterations, IBS and ES achieve a higher coverage rate
	than DBS. 
	
	\begin{table}[htbp]
		\caption{Coverage rate comparison for different batch size specifications} \label{tab:main1}
		\centering
		\begin{tabular}{c|c|c|c|c}\hline
			& $n$ & $4n$ & $7n$ & $10n$  \\ \hline\hline
			\multicolumn{5}{c}{Linear regression with $d=2$}\\ \hline
			IBS  & $0.975\pm0.009$ & $0.955\pm0.013$ & $0.970\pm0.010$ & $0.971\pm0.009$  \\ \hline
			ES & $0.938\pm 0.015$ & $0.947\pm0.014$ & $0.951\pm0.013$ & $0.950\pm0.013$  \\ \hline
			DBS  & $0.787\pm0.025$ & $0.878\pm0.020$ & $0.909\pm0.017$ & $0.912\pm0.019$  \\ \hline\hline
			\multicolumn{5}{c}{Logistic regression with $d=2$}\\ \hline
			IBS  & $0.934\pm0.015$ & $0.932\pm0.015$ & $0.946\pm0.014$ & $0.948\pm0.013$  \\ \hline
			ES  & $0.899\pm 0.018$ & $0.917\pm0.018$ & $0.934\pm0.015$ & $0.933\pm0.015$  \\ \hline
			DBS  & $0.842\pm0.023$ & $0.908\pm0.017$ & $0.932\pm0.018$ & $0.930\pm0.015$  \\ \hline
		\end{tabular}
	\end{table}
	

	\subsection{Number of batches}
	
	We next look into different choices of $m$ for $m\geq d+1$. We divide the analysis into two parts. 
	We first analyze the limiting volume of the confidence region for different choices of $m$. 
	We then analyze the pre-limit performance.
	
	The volume of the confidence region, which is a $d$-dimensional ellipsoid, takes the form
	\[V_d(m,w):=\left( \frac{d(m-1)}{m(m-d)}\right)^{d/2}\det(S_m(T)^{1/2})\alpha_m(\delta,w)^{d/2}q_d,\]
	where $q_d=\pi^{d/2}/\Gamma(d/2+1)$, with $\Gamma$ denoting the Gamma function, is the volume of a $d$-dimensional unit sphere.
	From Theorem \ref{th:fcll}, we have 
	$$\det((TS_m(T))^{1/2}) \Rightarrow \det(G)^2\det(h_m(B,w)^{1/2})
	\mbox{ as $T\rightarrow\infty$}.$$
	Thus, in Figure \ref{fig:m}, we compare 
	\[v_{d}(m,w) := \left(\frac{d(m-1)}{m(m-d)}\right)^{d/2}\E[\det(h_m(B,w)^{1/2})]\alpha_m(\delta,w)^{d/2}\]
	for different values of $m$. 
	We observe that as $m$ increases, the volume of the confidence region decreases. 
	However, there is a diminishing effect of increasing $m$ on decreasing the volume.
	Moreover, for pre-limit, the larger $m$ is, the smaller the size of each batch would be, 
	which implies that the batch means are further from their corresponding asymptotic distributions.
	These suggest that $m$ should not be too large. This is especially important when $T$ is relatively small.
	
	\begin{figure}[htb]
		\caption{Compare $v_{d}(m,w)$ for different values of $m$ and $d$.} 
		\label{fig:m}\centering
		\subfloat[$d=1$]{
			\includegraphics[width=0.3\textwidth]{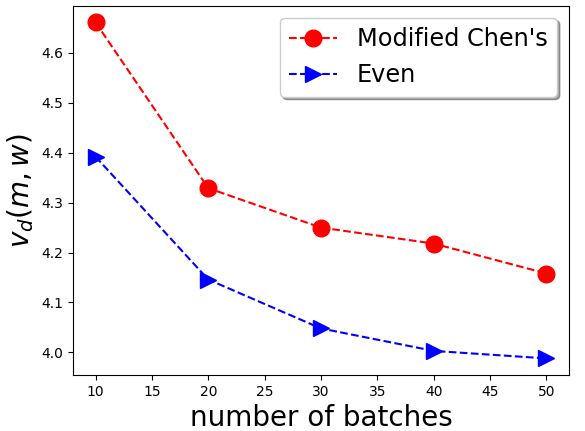}
		} ~~~
		\subfloat[$d=2$]{
			\includegraphics[width=0.3\textwidth]{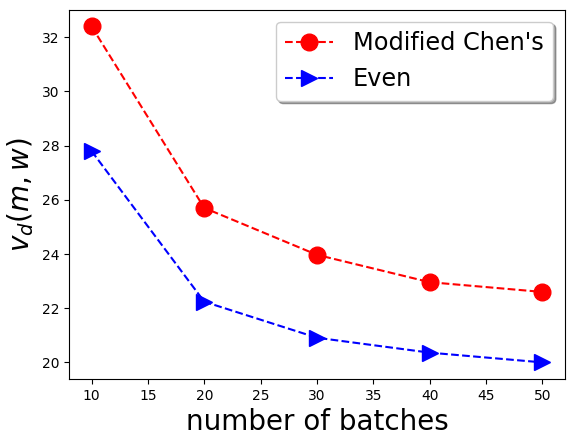}
		}~~~
		\subfloat[$d=5$]{
			\includegraphics[width=0.3\textwidth]{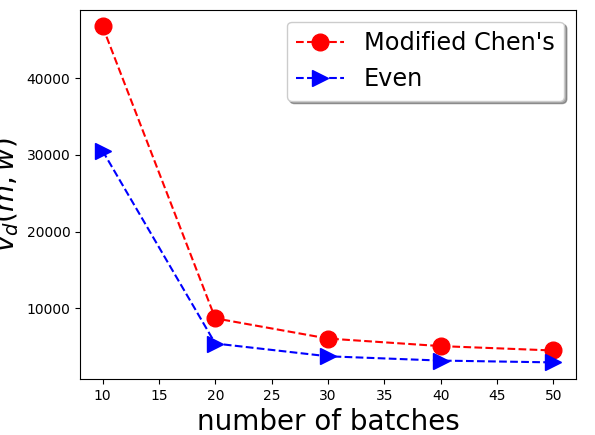}
		}
	\end{figure}
	
	In Table \ref{tab:main5}, we compare the pre-limit performance for different values of $m$. We use IBS for the batch size specification. 
	We focus on a relatively small number of iterations in these examples and we observe that when the numbers of iterations are small, large values of $m$ can lead to substantial under-coverage.
	
	\begin{table}[htbp]
		\caption{Coverage comparison for different values of $m$, logistic regression with $d=3$} \label{tab:main5}
		\centering
		\begin{tabular}{c|c|c|c|c}\hline
			& $0.1n$ & $0.4n$ & $0.7n$ & $n$  \\ \hline
			$m=10$  & $0.913\pm0.017$ & $0.933\pm0.015$ & $0.947\pm0.013$ & $0.933\pm0.015$  \\ \hline
			$m=20$  & $0.814\pm 0.024$ & $0.897\pm0.018$ & $0.919\pm0.017$ & $0.927\pm0.016$  \\ \hline
			$m=30$  & $0.730\pm0.027$ & $0.876\pm0.020$ & $0.909\pm0.017$ & $0.906\pm0.018$  \\ \hline
			$m=40$  & $0.615\pm0.030$ & $0.817\pm0.024$ & $0.845\pm0.022$ & $0.883\pm0.019$  \\ \hline
		\end{tabular}
	\end{table}
	
	\section{Comparison to other methods} \label{sec:sim}
	In this section, we compare our batch means method to two recently developed methods to draw statistical inference for model parameters in SGD.
	Specifically, the methods are developed in \cite{Chen:2018} and \cite{Su:2018}, which we refer to as batch means with an increasing number of batches (BMI) and 
	hierarchical incremental gradient descent (HiGrad), respectively. 
	We also introduce a fourth method, which is known as the sectioning method \cite{Glynn:2002}. This method is similar to the batch means method, but instead of dividing
	a single sample path into $m$ batches, we generate $m$ independent sample path of equal length. This method can also be viewed as a special case of HiGrad where the number of levels is $1$.
	
	BMI is mainly designed to draw marginal inference, i.e., it constructs confidence intervals for each parameter (dimension) separately. Thus, it does not impose $m\geq d+1$. 
	However, we note from Lemma \ref{lm:h} that when $m\leq d$, the estimated covariance matrix $S_m(T)$ is likely to be degenerate. Indeed, Figure \ref{fig:cov} plots the histogram
	of the determinant of $S_m(T)$ for a logistic regression problem with $n$ iterations. Note that in this case, BMI suggests setting $m=\lceil n^{0.25} \rceil=18$. 
	We compare two cases, one has $d=10<m$, the other has $d=20>m$. We observe that when $d>m$, the determinant of $S_m(T)$ is concentrated around zero.
	
	\begin{figure}[htb]
		\caption{Histogram for the determinant of $S_m(T)$ with $m=18$}
		\label{fig:cov}\centering
		\subfloat[$d=10$]{
			\includegraphics[width=0.36\textwidth]{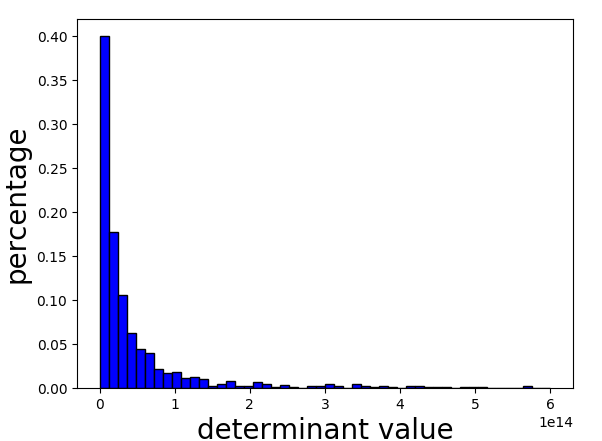}
		} ~~~
		\subfloat[$d=20$]{
			\includegraphics[width=0.36\textwidth]{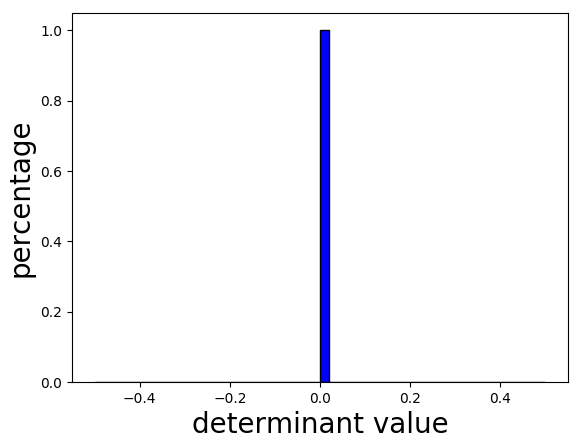}
		}~~~
	\end{figure}
	
	HiGrad has versions for both marginal inference and joint inference. However, we note that there are a lot more parameters to be specified (e.g., the tree structure and partition of data set) for successful implementation of this method. 
	HiGrad also requires modification to the original SGD procedure. 
	The sectioning method, a special case of HiGrad, has the advantage that estimators constructed for different sections are independent. 
	Thus, the asymptotic independence requirement is automatically satisfied.
	However, if we have limited amount of computational budget, focusing on a single long run instead of multiple shorter runs may get us closer to $x^*$ and the normality requirement. 
	
	In Table \ref{tab:main6} and \ref{tab:main8} we compare the finite sample coverage rate of our batch means method and other benchmark methods for logistic regression examples.
	For the batch means method (BM), we set $m=30$ and use IBS for batch size specification.
	When doing joint inference, we set the marginal confidence level at $1-0.05/d$ for BMI based on the Bonferroni correction. 
	
	For HiGrad, we use a two-layer tree structure with $5$ and $6$ nodes for the respective layers. Note that in this case, we have 30 branches in total.
	When doing marginal inference, for BM, we can construct the batch means confidence interval for each parameter (dimension) separately. Algorithm \ref{alg:marg_inf} summarizes our marginal inference procedure.
	\begin{algorithm} 
		\caption{Construct the marginal $100(1-\delta)\%$ confidence interval for each dimension of the model parameter $x^{*}$} \label{alg:marg_inf}
		\begin{algorithmic}[1]
			\STATE \textbf{Input:} The SGD sample path of $\{X_t: 0\leq t\leq T\}$, the number of batches $m$, the relative batch length parameter $w$
			\STATE  Find the appropriate scaling parameter $\alpha_m(\delta,w)$ with $d=1$. 
			\STATE  Calculate the batch means $\Xi_i$ for $i=1,2,\dots,m$
			\STATE For $k=1,\dots, d$, calculate
			\[\bar X_T(k):=\frac{1}{T}\sum_{t=1}^{T}X_t(k),  ~~~ \sigma_{m,T}(k):=\sqrt{\frac{1}{m-1}\sum_{k=1}^{m}(\Xi_i(k)-\bar X_T(k))^2}.\]
			
			\STATE \textbf{Output:} 
			\[R_T(k)=\left[\bar X_T(k)- \sqrt{\frac{\alpha_m(\delta,w)}{m}}\sigma_{m,T}(k) , \bar X_T(k)+ \sqrt{\frac{\alpha_m(\delta,w)}{m}}\sigma_{m,T}(k) \right],\]
			for $k=1,\dots d.$
		\end{algorithmic} 
	\end{algorithm}

	
	In Table \ref{tab:main6}, we show results for confidence regions (joint inference). 
	In Table \ref{tab:main8}, we show results for confidence intervals (marginal inference). The reported coverage
	rate in Table \ref{tab:main8} is the average coverage rate over the $d$ parameters. 
	We observe that BM achieves superior coverage rate comparing to the benchmark methods in all cases. 
	As all the methods we compare are asymptotically valid, we expect all these methods to achieve good coverage rate 
	when the number of iterations (samples) is large enough.
	We also note that the coverage rates deteriorate as the dimension of the problem, $d$, increases. 
	
	\begin{table}[H]
		\caption{Joint coverage rate comparison for different methods: logistic regression} \label{tab:main6}
		\centering
		\begin{tabular}{c|c|c|c|c}\hline
			& $n$ & $4n$ & $7n$ & $10n$  \\ \hline\hline
			\multicolumn{5}{c}{$d=2$}\\ \hline
			BM  & $0.919\pm0.017$ & $0.942\pm0.013$ & $0.936\pm0.015$ & $0.945\pm0.014$  \\ \hline
			BMI  & $0.890\pm 0.019$ & $0.919\pm0.017$ & $0.897\pm0.018$ & $0.899\pm0.018$  \\ \hline
			HiGrad & $0.833\pm0.023$ & $0.879\pm0.020$ & $0.901\pm0.018$ & $0.913\pm0.017$  \\ \hline
			Sectioning  & $0.659\pm0.029$ & $0.807\pm0.024$ & $0.842\pm0.023$ & $0.859\pm0.021$  \\ \hline\hline
			\multicolumn{5}{c}{$d=20$}\\ \hline
			BM & $0.638\pm0.029$ & $0.847\pm0.020$ & $0.878\pm0.020$ & $0.900\pm0.018$  \\ \hline
			BMI  & $0.537\pm 0.030$ & $0.642\pm0.031$ & $0.680\pm0.029$ & $0.698\pm0.028$  \\ \hline
			HiGrad & $0.090\pm0.017$ & $0.427\pm0.030$ & $0.510\pm0.029$ & $0.570\pm0.028$  \\ \hline
			Sectioning  & $0.024\pm0.009$ & $0.226\pm0.026$ & $0.311\pm0.028$ & $0.384\pm0.030$  \\ \hline
		\end{tabular}
	\end{table}
	
	
	\begin{table}[htbp]
		\caption{Marginal coverage rate comparison: logistic regression} \label{tab:main8}
		\centering
		\begin{tabular}{c|c|c|c|c}\hline
			& $n$ & $4n$ & $7n$ & $10n$  \\ \hline\hline
			\multicolumn{5}{c}{$d=2$}\\ \hline
			BM  & $0.938\pm0.015$ & $0.949\pm0.014$ & $0.945\pm0.014$ & $0.953\pm0.013$  \\ \hline
			BMI  & $0.905\pm 0.018$ & $0.920\pm0.017$ & $0.927\pm0.016$ & $0.932\pm0.015$  \\ \hline
			HiGrad & $0.860\pm0.020$ & $0.903\pm0.018$ & $0.913\pm0.017$ & $0.915\pm0.017$  \\ \hline
			Sectioning  & $0.757\pm0.026$ & $0.851\pm0.020$ & $0.872\pm0.020$ & $0.880\pm0.020$  \\ \hline\hline
			\multicolumn{5}{c}{$d=20$}\\ \hline
			BM  & $0.901\pm0.019$ & $0.937\pm0.015$ & $0.945\pm0.014$ & $0.953\pm0.013$  \\ \hline
			BMI  & $0.835\pm 0.023$ & $0.861\pm0.021$ & $0.860\pm0.029$ & $0.866\pm0.021$  \\ \hline
			HiGrad & $0.457\pm0.030$ & $0.610\pm0.029$ & $0.631\pm0.031$ & $0.650\pm0.029$  \\ \hline
			Sectioning  & $0.367\pm0.030$ & $0.535\pm0.031$ & $0.564\pm0.031$ & $0.580\pm0.030$  \\ \hline
		\end{tabular}
	\end{table}
	

	\section{Concluding remarks}
	In this paper, we adapt the batch means method to construct asymptotically valid confidence regions for 
	model parameters in SGD. Our construct is simple and does not require any 
	modification to the underline SGD algorithm. We extend the class of batch means method to allow unequal batch sizes.
	We also extend the asymptotic analysis of Polyak-Ruppert averaging by establishing a process level functional central limit theorem.
	
	Our construction requires that the number of batches $m>d$.
	However, we do not want $m$ to be too large, especially when the sample size $T$ is small.
	Following extensive numerical experiments, we suggest setting $m$ between $20$ and $40$ when $d<10$, and $m$ between $d+5$ and $d+10$ when $d\geq 10$. In terms of the batch size, both ES and IBS work well. 
	Lastly, if we do not have a good knowledge of the starting value for SGD, we would also recommend 
	discard the first few iterations when constructing batches to eliminate the initial transient bias. 
	

	
	
	
	\appendix
	\section{Appendix}
	The proof of Theorem  \ref{th:main} involves two main steps. The first step is to establish the process level convergence of $\bar X_t$.
	The second step is to establish some important properties of the function $g_m$. In particular, we need to show that $g_m(B,w)$ is positive definite with probability 1.
	
	For the first step, we start by presenting two auxiliary lemmas. 
	%
	The first lemma extends the Azuma-Hoeffding inequality to the multidimensional case.
	\begin{lemma}\label{boundmartdiff}
		Let $\M$ be a martingale in $\R^{d}$ with $\M_0=0$, and for every $n$ the martingale difference $\M_n- \M_{n-1}$ satisfies $\|\M_n-\M_{n-1}\|\leq \sigma_n \leq 1/2$. Then for any $a>1$, 
		\[\textnormal{pr}(\|\M_n\|\geq a) \leq 2\exp\left(1-(a-1)^2/(\sum^n_{i=1}2\sigma_i^2)\right).\]
	\end{lemma}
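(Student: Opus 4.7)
The plan is to adapt the classical exponential-moment (Chernoff) argument that yields scalar Azuma-Hoeffding to the multidimensional setting. The main technical tool is a dimension-free Laplace transform bound for $\|\M_n\|$, from which the stated tail inequality then follows by Markov's inequality and an appropriate choice of the free exponential parameter.

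First, I would establish a Pinelis/Kallenberg-Sztencel type comparison of the form
\[
\E\bigl[\cosh(\lambda\|\M_n\|)\bigr] \;\leq\; \prod_{i=1}^{n}\cosh(\lambda\sigma_i) \;\leq\; \exp\!\Bigl(\tfrac{\lambda^{2}}{2}\sum_{i=1}^{n}\sigma_i^{2}\Bigr),\qquad \lambda>0.
\]
The proof is by induction on $n$. Conditioning on $\mathcal{F}_{n-1}$ and writing $\M_n=\M_{n-1}+D_n$ with $\E[D_n\mid\mathcal{F}_{n-1}]=0$ and $\|D_n\|\leq\sigma_n$, one establishes the conditional inequality
\[
\E\bigl[\cosh(\lambda\|\M_{n-1}+D_n\|)\bigm|\mathcal{F}_{n-1}\bigr] \;\leq\; \cosh(\lambda\|\M_{n-1}\|)\,\cosh(\lambda\sigma_n),
\]
by a Hilbert-space majorization: for any fixed $y\in\R^{d}$ and any centered $D$ with $\|D\|\leq\sigma$, the distribution of $\|y+D\|$ is dominated (for convex increasing test functions) by that of $\bigl|\,\|y\|+\sigma\varepsilon\,\bigr|$, where $\varepsilon$ is a Rademacher sign. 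Telescoping then yields the displayed product bound, and the elementary estimate $\cosh(x)\leq e^{x^{2}/2}$ gives the final Gaussian-type form.

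Second, since $2\cosh(t)\geq e^{t}$, Markov's inequality gives for any $\lambda>0$,
\[
\textnormal{pr}(\|\M_n\|\geq a) \;\leq\; e^{-\lambda a}\E[e^{\lambda\|\M_n\|}] \;\leq\; 2e^{-\lambda a}\E[\cosh(\lambda\|\M_n\|)] \;\leq\; 2\exp\!\Bigl(\tfrac{\lambda^{2}S}{2}-\lambda a\Bigr),
\]
where $S:=\sum_{i=1}^{n}\sigma_i^{2}$. Choosing $\lambda=(a-1)/S$ and simplifying gives an exponent of $-(a^{2}-1)/(2S)$, which is bounded above by $1-(a-1)^{2}/(2S)$ (the difference is $1+(a-1)/S\geq 0$ for $a>1$); this yields the stated inequality. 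Note that the optimal choice $\lambda=a/S$ would in fact give the sharper Pinelis bound $2\exp(-a^{2}/(2S))$; the form displayed in the lemma corresponds to the slightly suboptimal but notationally cleaner choice above.

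The main obstacle is the first step. Since $\cosh(\lambda\|\cdot\|)$ is not separately convex in the coordinates of its argument, one cannot apply the scalar Hoeffding lemma componentwise and retain a dimension-free bound, so a direct net/union-bound argument on the unit sphere would introduce an unwanted factor of order $e^{cd}$. The Kallenberg-Sztencel comparison with a symmetric two-point distribution is the standard route to the required conditional bound and is what allows the final inequality to be dimension-free; verifying this majorization in the Euclidean setting, using $\|y+\sigma u\|^{2}+\|y-\sigma u\|^{2}=2\|y\|^{2}+2\sigma^{2}\|u\|^{2}$ together with convexity of $t\mapsto\cosh(\lambda\sqrt{t})$ in $t$, is the one genuinely nontrivial ingredient.
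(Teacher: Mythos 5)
Your argument is correct in substance but takes a genuinely different route from the paper's. The paper adapts Hayes' Theorem~1.8: it constructs an explicit one-dimensional dominating process $Y_i = Y_0 + |Z_i|$ with $Y_0 = 1+\sum_i\sigma_i^2/(a-1)$, proves $\|\M_i\|\le Y_i$ by induction (this is where the hypothesis $\sigma_i\le 1/2$ is actually used), bounds $\E[\exp(\lambda Z_n)]\le\prod_i\cosh(\lambda\sigma_i)$ via a secant-line convexity argument on the scalar increments, and then applies Markov with $\lambda=(a-1)/\sum_k\sigma_k^2$; the additive offset $Y_0$ is precisely what produces the ``$1-$'' and the shift $a-1$ in the exponent. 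You instead prove the Pinelis/Kallenberg--Sztencel supermartingale bound $\E[\cosh(\lambda\|\M_n\|)]\le\prod_i\cosh(\lambda\sigma_i)$ directly, which dispenses with the auxiliary process, never uses $\sigma_i\le 1/2$, and yields the strictly sharper tail $2\exp(-(a^2-1)/(2\sum_i\sigma_i^2))$ (or $2\exp(-a^2/(2\sum_i\sigma_i^2))$ with the optimal $\lambda$), from which the lemma's bound follows since, as you compute, the exponents differ by $1+(a-1)/\sum_i\sigma_i^2\ge 0$. Your final arithmetic is right, and for the purposes of this paper either bound suffices.

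One clause needs tightening. The distribution of $\|y+D\|$ is \emph{not} dominated by that of $|\,\|y\|+\sigma\varepsilon\,|$ for every convex increasing test function: take $\varphi(x)=x$ and $D$ uniform on $\{\pm\sigma v\}$ with $\|v\|=1$, $v\perp y$ and $0<\sigma<\|y\|$; then $\E\|y+D\|=(\|y\|^2+\sigma^2)^{1/2}>\|y\|=\E|\,\|y\|+\sigma\varepsilon\,|$. The domination holds only for test functions $\varphi$ such that $u\mapsto\varphi(\sqrt{u})$ is convex and increasing on $[0,\infty)$ --- which is exactly the property of $\cosh(\lambda\sqrt{\cdot})$ you invoke at the end (write $\|y+D\|^2\le\|y\|^2+2\langle y,D\rangle+\sigma^2$, use convexity in the affine variable $\langle y,D\rangle\in[-\|y\|\sigma,\|y\|\sigma]$ to compare with the two endpoints, take expectations using $\E[\langle y,D\rangle]=0$, and finish with $\cosh(A+B)+\cosh(A-B)=2\cosh A\cosh B$). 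So the proof goes through exactly as you sketch it in your final sentence, but the blanket ``convex increasing'' majorization claim in the middle should be replaced by this restricted one.
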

	
		\begin{proof}[Proof of Lemma \ref{boundmartdiff}]
			The lemma follows similar lines of arguments but extends the results of Theorem 1.8 in \cite{ThomaHayes}. 
			
			As $\M$ is a martingale, we have the following decomposition for $\M_i$: $\M_i= \alpha _{i} \M_{i-1} + P_i$, where $\alpha _{i}=(\|\M_{i-1}\|^2)^{-1}\langle \M_i,\M_{i-1} \rangle$ and $P_i$ is orthogonal to $\M_{i-1}$. Define $A_i=(\alpha _{ i}-1)\|\M_{i-1}\|$.
			Then $\|\M_i- \M_{i-1}\|^2 = A_i^2 + \|P_i\|^2 \leq \sigma_i^2$.
			
			We next define $D=(D_i: 1\leq i\leq n)$, $Z=(Z_i: 0\leq i\leq n)$ and $Y=(Y_i: 0\leq i\leq n)$ by the following recursion:
			$Y_0 = 1+ \sum^n_{i=1} \sigma_i^2/(a-1)$ and $Z_0 = 0$. For $i\geq1$,
			\begin{eqnarray*}
				&& D_i = \S(Z_{i-1})\left((Y_{i-1}^2 +2 Y_{i-1} A_i +\sigma_i^2)^{1/2} - Y_{i-1}\right);
				Z_i= Z_{i-1} +D_i;\quad Y_i= Y_0 + |Z_i|;
			\end{eqnarray*}
			where $\S(z) = 1-2 \mathds{1}\left(z<0\right)$.
			
			We first prove that 
			\begin{equation} \label{eq:1d}
			\|M_i\|\leq Y_i.
			\end{equation}
			
			The proof is divided into two steps.
			We first establish a bound for $|Z_i|$. We then prove \eqref{eq:1d} by induction.
			Note that as
			\begin{eqnarray*}
				&&\S(Z_{i-1}) Z_{i} = \S(Z_{i-1}) Z_{i-1} + \S(Z_{i-1})D_i \\
				&=&  |Z_{i-1}|+ (Y^2_{i-1} +2 Y_{i-1}A_i + \sigma_i^2)^{1/2} - Y_{i-1}
				= (Y^2_{i-1} +2 Y_{i-1}A_i + \sigma_i^2)^{1/2} - Y_{0},
			\end{eqnarray*}
			we have
			$|Z_i| =\left|(Y^2_{i-1} +2 Y_{i-1}A_i + \sigma_i^2)^{1/2} - Y_{0} \right|$.
			
			Now by definition $Y_0 >0=\|\M_0\|$. 
			Suppose \eqref{eq:1d}  holds for $i-1$. Then
			\begin{eqnarray*}
				Y_i &=& Y_0 + |Z_i| = Y_0 +\left|(Y^2_{i-1} + 2 Y_{i-1} A_i +(\sigma_i)^2)^{1/2} - Y_0\right|
				\geq (Y_{i-1}^2 +2Y_{i-1} A_i + \sigma_i^2)^{1/2},
			\end{eqnarray*}
			which implies 
			$Y_i^2\geq Y_{i-1}^2 +2Y_{i-1} A_i + \sigma_i^2.$
			
			By the definition of $A_i$, we have
			$\|\M_{i}\|^2 = (\|\M_{i-1}\| + A_i)^2 + \|P_i\|^2.$
			Then
			\[Y_i^2-\|\M_i\|^2 \geq (Y_{i-1} - \|\M_{i-1}\|)(Y_{i-1} + \|\M_{i-1}\|+2A_i) + (\sigma_i^2- A_i^2-\|P_i\|^2)>0,\]
			where the last inequality is due the facts that 
			i) $Y_{i-1} >\|X_{i-1}\|$, 
			ii) $Y_{i-1}\geq Y_0\geq 1 \geq 2\sigma_i \geq 2|A_i|$,
			and iii) $\sigma_i^2\geq A_i^2+\|P_i\|^2$.
			
			We next prove that for $\lambda = (a-1)/(\sum^n_{k=1} \sigma_k^2)$,
			\begin{equation} \label{eq:zbound}
			E[\exp(\lambda Z_n)] \leq \prod_{i=1}^n \cosh (\lambda \sigma_i)\leq \exp\left(\sum_{i=1}^{n}\lambda^2\sigma_i^2/2\right).
			\end{equation}
			The second inequality follows straightforwardly. We shall thus focus on establish the first inequality in \eqref{eq:zbound}.
			To do so, we first establish a bound for $\exp(\lambda D_i)$. 
			In particular, we shall first prove that 
			\begin{equation}\label{eq:dbound}
			\exp(\lambda D_i) \leq \cosh(\lambda \sigma_i) + A_i/\sigma_i \S(Z_{i-1}) \sinh(\lambda \sigma_i). 
			\end{equation}
			
			Fix $Z_{i-1}$, $D_i$ can be view as a function of $A_i$.
			We can thus define $f(x):=\exp(\lambda D_i(x))$.
			Note that $D_i(\sigma_i) = \S(Z_{i-1}) \sigma_i$ and $D_i(-\sigma_i) = -\S(Z_{i-1}) \sigma_i$. Then the line
			linking $(-\sigma_i, f(-\sigma_i))$ and $(\sigma_i, f(\sigma_i))$ takes the form
			$y(x) = \cosh(\lambda \sigma_i) + x/\sigma_i \S(Z_{i-1}) \sinh(\lambda\sigma_i).$ 
			Then, to prove \eqref{eq:dbound}, it suffices to show $\partial^2 f(x)/\partial x^2 > 0$ on the interval $[-\sigma_i, \sigma_i]$.
			\begin{eqnarray*}
				\partial^2 f(x) /\partial x^2 &=& \left( \left(\lambda \partial D_i(x)/\partial x\right)^2 + \lambda \partial^2 D_i(x)/\partial a^2  \right)f(x)\\
				&=& \left(\lambda (Y_{i-1}^2 + 2 Y_{i-1} x+\sigma_i^2 )^{1/2} -\S(Z_{i-1})\right) \frac{\lambda Y_{i-1}^2 f(x)}{(Y^2_{i-1} +2Y_{i-1} x+1)^{3/2}}.
			\end{eqnarray*}
			Now, for $\lambda = (a-1)/\sum^n_{k=1} \sigma_k^2$ and $x\in[-\sigma_i, \sigma_i]$, we have
			\[\lambda (Y_{i-1}^2 + 2 Y_{i-1} A_i +\sigma_i^2 )^{1/2} \geq \lambda(Y_0-\sigma_i) = 1+\lambda (1-\sigma_i) \geq 1 \geq \S(Z_{i-1}).\]
			Thus, $\partial^2 f(x)/ \partial x^2 > 0$ and we have proved \eqref{eq:dbound}. 
			Next, we note that
			$E[\exp(\lambda D_n)|Z_{n-1}] 
			=E[E[\exp(\lambda D_n)|\M_0,\dots,\M_{n-1}]|Z_{n-1}].$ 
			As by \eqref{eq:dbound},
			\begin{eqnarray*}
				&&E[\exp(\lambda D_n)|\M_0,\dots,\M_{n-1}]\\ 
				&\leq& \cosh (\lambda \sigma_n) + \S(Z_{n-1}) \sinh(\lambda \sigma_n) \mathbb{E}[A_n|\M_0,\dots,\M_{n-1}] 
				= \cosh(\lambda \sigma_n),
			\end{eqnarray*}
			we have
			\begin{eqnarray*}
				&&E[\exp(\lambda Z_n)] = E[\exp(\lambda Z_{n-1})\exp(\lambda D_n)]
				= E[\exp(\lambda Z_{n-1}) E[\exp(\lambda D_n)|Z_{n-1})]\\
				&\leq& E[\exp(\lambda Z_{n-1})] \cosh(\lambda \sigma_n)
				\leq\prod_{i=1}^{n} \cosh(\lambda \sigma_i) \mbox{ by recursion.}
			\end{eqnarray*}
			
			Putting \eqref{eq:1d} and \eqref{eq:dbound} together, we have
			\begin{eqnarray*}
				E[\exp(\lambda \|\M_n\|)] &\leq&E [\exp\lambda  Y_n] = \E[\exp(\lambda (Y_0+|Z_n|))]\\ 
				&\leq& e^{\lambda Y_0}\E[\exp(\lambda Z_n)+\exp(-\lambda Z_n)]
				\leq 2e^{\lambda Y_0} \exp\left(\sum_{i=1}^{n}\lambda^2\sigma_i^2/2\right).
			\end{eqnarray*}
			Lastly,
			\begin{eqnarray*}
				\textnormal{pr}(\|\M_n\|\geq a) &\leq& E[\exp(\lambda \|\M_n\| -\lambda a)]\leq  2\exp\left(\lambda Y_0 - \lambda a+ \sum^n_{i=1}\sigma_i^2 \lambda^2/2 \right)\\
				&=& 2 \exp\left(1-(a-1)^2/(2 \sum^n_{k=1} \sigma_k^2)\right) \mbox{ by plugging in the value of $\lambda$}.
			\end{eqnarray*}
		\end{proof}

	The second lemma characterizes the convergence rate of  
	an important term in stochastic gradient descent iterations. It tightens the bound established in \cite{polyak1992acceleration}.
	
	Let 
	\[\bar{\beta}^t_s:=\gamma_s \sum_{i=s}^{t-1}\prod_{k=s+1}^{i}\left(I-\gamma_k \nabla^2 H(x^{*})\right),\] 
	where we define $\prod_{k=s+1}^{s}\left(I-\gamma_k \nabla^2 H(x^{*})\right)
	= I$.
	We also define 
	$\phi^t_s =\bar{\beta}^t_s- \nabla^2 H(x^{*})^{-1}.$
	
	\begin{lemma} \label{lm:sgd_de}
		For $\gamma_t=at^{-r}$ with some $a>0$ and $1/2<r<1$.
		$\sum^{t-1}_{s=0} \|\phi_s^t\| = O(t^{r}). $
	\end{lemma}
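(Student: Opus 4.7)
The plan is to exploit that $A := \nabla^2 H(x^*)$ is a fixed symmetric positive definite matrix (by Assumption \ref{ass:main1}), so $A$ commutes with every factor $I-\gamma_k A$ appearing inside the products. Writing $P_s^i := \prod_{k=s+1}^{i}(I-\gamma_k A)$ with the convention $P_s^s = I$, the identity $\gamma_{i+1} A P_s^i = P_s^i - P_s^{i+1}$ telescopes, giving $\sum_{i=s}^{t-1}\gamma_{i+1} A P_s^i = I - P_s^t$. Adding and subtracting $\gamma_{i+1}$ inside the definition of $\bar\beta_s^t$ and multiplying by $A^{-1}$, I obtain the clean decomposition
\[
\phi_s^t \;=\; -A^{-1} P_s^t \;+\; \sum_{i=s}^{t-1}(\gamma_s - \gamma_{i+1})\,P_s^i,
\]
which splits $\phi_s^t$ into a \emph{tail} term and a \emph{step-size-drift} term. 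These will be bounded separately.

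Strong convexity (Assumption \ref{ass:main1}) yields $\lambda_{\min}(A)\ge C>0$, so for $k$ larger than some $k_0$ we have $\|I-\gamma_k A\|\le 1-C\gamma_k$, and hence
\[
\|P_s^i\| \;\le\; \exp\Bigl(-C\sum_{k=s+1}^{i}\gamma_k\Bigr) \;\le\; \exp\bigl(-c\,(i^{1-r}-s^{1-r})\bigr)
\]
for some $c>0$, using the integral comparison $\sum_{k=s+1}^{i} a k^{-r}\gtrsim \frac{a}{1-r}(i^{1-r}-s^{1-r})$. The finitely many indices $s<k_0$ contribute $O(1)$ and will be absorbed at the end.

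With this envelope in hand, I would estimate the two pieces of $\sum_{s=0}^{t-1}\|\phi_s^t\|$. For the tail term $\sum_{s}\|A^{-1}P_s^t\|$, the change of variable $s=(1-v)t$ linearizes the exponent as $t^{1-r}-s^{1-r}\approx (1-r)v\,t^{1-r}$, so the sum is comparable to $t\int_0^1 \exp\bigl(-c'(1-r) t^{1-r} v\bigr)dv = O(t^{r})$. For the drift term, the mean value theorem yields $|\gamma_s-\gamma_{i+1}|\le ar\,s^{-r-1}(i-s+1)$, so after combining with the exponential decay of $\|P_s^i\|$, each inner sum $\sum_{i\ge s}(i-s+1)\exp(-c(i^{1-r}-s^{1-r}))$ is governed by an effective decay scale of $\gamma_s^{-1}=s^r/a$; a routine computation shows it is $O(s^{2r})$, which makes each $s$-term of order $s^{r-1}$, and the outer sum $\sum_{s=1}^{t} s^{r-1}= O(t^{r})$ as required.

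The main obstacle is carrying out the drift estimate cleanly: since $\gamma_k$ depends on $k$, the exponent $i^{1-r}-s^{1-r}$ is concave rather than linear in $i-s$, so the usual geometric bound does not apply verbatim. I expect to split the inner sum into a ``near'' regime $i-s\lesssim s$ where the linearization $i^{1-r}-s^{1-r}\approx (1-r)s^{-r}(i-s)$ is valid and a ``far'' regime $i-s\gtrsim s$ where the exponential dominates any polynomial prefactor outright. Care must also be taken for small $s$ (below $k_0$) where the one-step contraction may fail; those indices contribute only an $O(1)$ boundary term and do not affect the $O(t^r)$ bound.
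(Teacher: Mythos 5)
Your proposal is correct and follows essentially the same route as the paper: the identity $\phi_s^t = \sum_{i=s}^{t-1}(\gamma_s-\gamma_{i+1})P_s^i - A^{-1}P_s^t$ is exactly the paper's decomposition $\phi_s^t = S_s^t - \nabla^2H(x^*)^{-1}\beta_s^t$ (which the paper imports from \cite{polyak1992acceleration} and you rederive by telescoping), and your estimates --- the exponential envelope $\|P_s^i\|\le \exp(-c(i^{1-r}-s^{1-r}))$ with effective decay scale $\gamma_s^{-1}\sim s^r$, giving $O(t^r)$ for the tail sum and a per-index bound of order $s^{r-1}$ for the drift term --- match the paper's bounds $\sum_j\|\beta_j^t\| = O(\gamma_t^{-1})$ and $\|S_j^t\| \le K(\gamma_j-\gamma_{j+1})/\gamma_j^2 = O(j^{-(1-r)})$. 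The remaining work you flag (near/far splitting of the concave exponent) is routine and does not change the conclusion.
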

		\begin{proof}[Proof of Lemma \ref{lm:sgd_de}]
			We start by summarizing some useful results from \cite{polyak1992acceleration}.
			Let $\beta_s^s=I$ and $\beta_s^{t+1}=\beta_s^t(I-\gamma_t A)$ for $t\geq s$.
			There exists $\lambda,K\in(0,\infty)$ such that for any $s\geq 0$ and $t\geq s$,
			$\|\beta^t_s\|\leq K \exp\left(-\lambda \sum^{t-1}_{i=s} \gamma_i\right),$
			where we define $\sum^{s-1}_{i=s}\gamma_i=0$.
			Now let $S^t_s=\sum^{t-1}_{i=s} (\gamma_s-\gamma_i)\beta^i_s$.
			Then it can be shown that $\phi^t_s=S_s^t - \nabla^2 H(x^{*})^{-1} \beta_s^t$.
			Let $m^i_s =\sum^{i}_{k=s} \gamma_k$. Then
			\[\sum^{t-1}_{i=s} m^i_s \exp(-\lambda m^i_s) = o(1/\gamma_s)
			~~~\mbox{ and } ~~~ \|\bar{\beta}^t_s \|\leq K.\]
			
			We are now ready to prove the lemma. Note that
			$\|\phi^t_s\|\leq \|S^t_s\|+\|\nabla^2 H(x^{*})^{-1}\| \|\beta^t_s\|.$
			
			In what follows, we shall establish bounds for $\|S^t_s\|$ and $\|\beta^t_s\|$ respectively.
			We first note that
			\begin{eqnarray*}
				\|S^t_j\|&=&\left\|\sum^{t-1}_{i=j+1}\left(\sum^{i}_{k=j+1} (\gamma_{k-1}-\gamma_{k})\right)\beta^i_j\right\|\\
				&=&\left\|\sum^{t-1}_{i=j+1}\left(\sum^{i}_{k=j+1} (\gamma_{k-1}-\gamma_{k})\gamma_{k-1}(\gamma_{k-1})^{-1}\right)\beta^i_j\right\|\\
				&\leq& K(\gamma_{j}-\gamma_{j+1})(\gamma_j)^{-1} \sum^{t-1}_{i=j} m^i_j \exp(-\lambda m^i_j).
			\end{eqnarray*}  
			Thus, for $j$ large enough, $\|S^t_j\| \leq  K(\gamma_{j}-\gamma_{j+1})/\gamma_j^2$.
			We also notice tha by L'Hospital's Rule, $(\gamma_{j}-\gamma_{j+1})(\gamma_j^2)^{-1} = O(j^{-(1-r)})$.
			Then for $t$ large enough,
			\begin{equation} \label{eq:s}
			\sum^{t-1}_{j=0} \| S^t_j\|\leq K \sum^{t-1}_{j=0} j^{-(1-r)} \leq K \int^t_{0} x^{-(1-r)} =O(t^{r}).
			\end{equation}
			
			Next, we note that
			\begin{eqnarray}\label{eq:beta_sum_bd}
			\sum^{t-1}_{j=0} \|\beta^t_j\| \leq \sum^{t-1}_{j=0} \exp(-\lambda (t-j)\gamma_t)
			\leq  \frac{1}{1-\exp(-\lambda \gamma_t)} = O(\gamma_t^{-1}) = O(t^{r}). 
			\end{eqnarray}
			
			Combining \eqref{eq:s} and \eqref{eq:beta_sum_bd}, we have
			\[\sum^{t-1}_{j=0} \|\phi^t_j\| \leq  \sum^{t}_{j=0} \|S^t_j\|+\left\|\nabla^2 H(x^{*})^{-1}\right\| \cdot \sum^{t}_{j=0} \|\beta^t_j\|=O(t^{r}).\]
		\end{proof}
		
	Next, we establish the process level convergence of $\bar X_t$
	\begin{theorem} \label{th:fcll} 
		Under Assumption \ref{ass:main1} and \ref{ass:main2}, there exists a matrix $G$, such that
		\[n^{1/2} t(\bar X_{nt}-x^*)\Rightarrow GB(t) \mbox{ in $D(0,\infty)$ as $n\rightarrow\infty$,}\]
		where $D(0,\infty)$ denotes the space of right continuous functions with left limit endowed with Skorokhod $J_1$ topology.
	\end{theorem}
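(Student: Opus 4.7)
The starting observation is algebraic:
\[n^{1/2} t\,(\bar X_{nt}-x^*) \;=\; n^{-1/2}\sum_{s=1}^{\lfloor nt\rfloor}\Delta_s,\]
so it suffices to prove a functional CLT for the partial-sum process of the centered iterates. The plan is to reduce this to the martingale FCLT for $\{\xi_s\}$ via the standard Polyak--Ruppert linearization and then show that the remainder is negligible uniformly on compact time intervals. Concretely, Assumption \ref{ass:main1} gives a Taylor expansion $\nabla H(X_{s-1}) = A\Delta_{s-1} + e_s$ with $A := \nabla^2 H(x^*)$ and $\|e_s\| \leq L'\|\Delta_{s-1}\|^2$, so the recursion \eqref{eq:rec} becomes $\Delta_s = (I-\gamma_s A)\Delta_{s-1} - \gamma_s(\xi_s + e_s)$. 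Iterating, summing over $s\leq T$, interchanging the order of summation to introduce $\bar\beta^T_s$, and using the decomposition $\bar\beta^T_s = \nabla^2 H(x^*)^{-1} + \phi^T_s$ yields
\[\sum_{s=1}^{T}\Delta_s \;=\; -A^{-1}\sum_{s=1}^{T}\xi_s \;+\; R_T, \qquad R_T \;=\; (\text{initial term}) \;-\;\sum_{s=1}^{T}\phi^T_s\xi_s \;-\;\sum_{s=1}^{T}\bar\beta^T_s e_s.\]

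For the leading term I would invoke the martingale FCLT. Assumption \ref{ass:main2} gives martingale differences and uniform boundedness (so the Lindeberg condition is trivial), while part 1 of that assumption plus $\Delta_s \xrightarrow{\textnormal{pr}} 0$ (standard under Assumptions \ref{ass:main1}--\ref{ass:main2}) implies $n^{-1}\sum_{s=1}^{\lfloor nt\rfloor}E[\xi_s\xi_s^T\mid\mathcal{F}_{s-1}] \xrightarrow{\textnormal{pr}} tU$. Hence $n^{-1/2}\sum_{s=1}^{\lfloor nt\rfloor}\xi_s \Rightarrow U^{1/2}B(t)$ in $D[0,\infty)$, so the leading term converges to $GB(t)$ with $G := -A^{-1}U^{1/2}$ (or any square root giving $GG^T = A^{-1}UA^{-1}$).

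The substantive work is to show $\sup_{t\in[0,T_0]} n^{-1/2}\|R_{\lfloor nt\rfloor}\| \xrightarrow{\textnormal{pr}} 0$ for every $T_0$. The initial-condition piece decays exponentially by the uniform bound $\|\prod_{k=1}^{t}(I-\gamma_k A)\| \leq K\exp(-\lambda\sum_{k=1}^{t}\gamma_k)$. For the martingale correction $\sum_s\phi^T_s\xi_s$, boundedness of $\|\phi^T_s\|$ (from $\|\bar\beta^T_s\|\leq K$ plus $\|A^{-1}\|<\infty$) combined with Lemma \ref{lm:sgd_de} gives $\sum_{s=1}^{T}\|\phi^T_s\|^2 = O(T^r)$, so Doob's maximal inequality produces an $O_P(n^{(r-1)/2}) = o_P(1)$ bound uniformly in $t\leq T_0$. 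For the nonlinearity piece, I would first apply Lemma \ref{boundmartdiff} to the martingale part of $\Delta_s$ together with a union bound over $s\leq nT_0$ to get a high-probability envelope $\|\Delta_s\|^2 = O(\gamma_s\log s)$ uniformly in $s$; combined with $\|\bar\beta^T_s\|\leq K$ this gives $\sum_s\|\bar\beta^T_s e_s\| = O(T^{1-r}\log T)$, whose scaling by $n^{-1/2}$ vanishes since $r>1/2$.

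Finite-dimensional convergence follows from Steps~2--3 and Slutsky. For tightness, the leading martingale-FCLT piece is tight by standard arguments, and uniform $o_P(1)$ control of the remainder transfers tightness to the whole process, giving convergence in $D(0,\infty)$. The main obstacle I anticipate is the last term of $R_T$: pointwise-in-$s$ control of $\|\Delta_s\|$ is insufficient because we need the \emph{sup} of $\sum_s\bar\beta^T_s e_s$ over all prefixes, which is precisely why Lemma \ref{boundmartdiff} is proved in the multidimensional Azuma-Hoeffding form --- its sub-Gaussian tail lets us take a union bound over the polynomially many indices $s\leq nT_0$ without losing the power of $n$ we need.
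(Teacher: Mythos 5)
Your decomposition is essentially the paper's: it is the Polyak--Juditsky decomposition $\bar X_t-x^*=J^{(1)}(t)+J^{(2)}(t)+J^{(3)}(t)$ with leading term $t^{-1}\sum_s A^{-1}\xi_s$, martingale correction $t^{-1}\sum_s\phi^t_s\xi_s$, and a nonlinearity remainder, and your treatment of the leading term via the martingale FCLT (bounded differences, conditional-variance convergence from Assumption \ref{ass:main2} plus $\Delta_s\to0$) matches the paper's. The genuine gap is your handling of $\sum_{s\leq T}\phi^T_s\xi_s$. Doob's maximal inequality does not apply to the object you need to control: for each \emph{fixed} $T$ the prefix sums $\sum_{s\leq t}\phi^T_s\xi_s$, $t\leq T$, form a martingale, but uniform negligibility on $[0,T_0]$ requires a supremum over the \emph{outer} index $T\leq nT_0$, and the weights $\phi^T_s$ change with $T$, so $T\mapsto\sum_{s\leq T}\phi^T_s\xi_s$ is not a martingale and Doob gives no control across different $T$. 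The fallback of a per-$T$ Chebyshev bound, $\textnormal{pr}(\|\sum_{s\leq T}\phi^T_s\xi_s\|\geq\delta n^{1/2})=O(T^{r}/n)$ by Lemma \ref{lm:sgd_de}, followed by a union bound over the $nT_0$ values of $T$, yields $O(n^{r})$, which diverges. This is precisely why the paper proves the multidimensional Azuma--Hoeffding inequality (Lemma \ref{boundmartdiff}): its sub-Gaussian tail $\exp(-cn^{1-r})$ survives multiplication by $nT_0$. You correctly sensed that Lemma \ref{boundmartdiff} exists to enable a union bound over polynomially many indices, but you attached it to the wrong term.

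On the nonlinearity term, you are making it harder than it is. In the paper this is the easy piece: the summands enter through their norms with uniformly bounded weights ($\|\phi^t_s\|\leq K$), so the supremum over prefixes $t\leq T_0$ is dominated by the single nonnegative sum up to $nT_0$, and the known almost-sure fact $t^{-1/2}\sum_{i\leq t}\|\Delta_i\|^2\to0$ from \cite{polyak1992acceleration} finishes it --- no high-probability envelope for $\|\Delta_s\|$ and no union bound are needed there. Your proposed envelope $\|\Delta_s\|^2=O(\gamma_s\log s)$ could probably be pushed through, but it is itself only sketched (the drift part of $\Delta_s$ is not a martingale, so Lemma \ref{boundmartdiff} alone does not deliver it). One point in your favor: your normalization $GG^{T}=A^{-1}UA^{-1}$ is the correct one, whereas the paper's displayed $G=\nabla^2H(x^*)^{-1}U$ appears to drop a square root of $U$.
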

	
	\begin{proof}[Proof of Theorem \ref{th:fcll}]
		We start by summarizing some useful results from \cite{polyak1992acceleration}.
		We first note that $\bar X_t$ has the following decomposition: 
		\[\bar X_t-x^*= J^{(1)}(t)+J^{(2)}(t)+J^{(3)}(t),\]
		where 
		\begin{eqnarray*}
			J^{(1)}(t) &=& -t^{-1} \sum^{t-1}_{s=0}(\nabla^2 H(x^{*}) + \phi^t_s) (\nabla H(X_s) - \nabla^2 H(x^{*}) \Delta_s,\\
			J^{(2)}(t) &=& t^{-1}\sum^{t-1}_{s=0} \nabla^2 H(x^{*})^{-1} \xi_s, \quad
			J^{(3)}(t) = t^{-1}\sum^{t-1}_{s=0} \phi^t_s \xi_s.
		\end{eqnarray*}
		Recall that $\xi_t=\mathcal{G}(X_{t-1},\zeta_t)-\nabla H(X_{t-1})$ and $\Delta_t=X_t-x^*$.
		We also have the following properties about the decomposition: 
		\begin{itemize}
			\item[P1)] $ t^{-1/2} \sum^{t-1}_{i=1} \|\Delta_i\|^2 \to 0$ almost surely (a.s.) as $t\to \infty$, 
			\item[P2)] $\|\phi^t_s\| \leq K$ for some $K\in (0,\infty)$ 
			\item[P3)] $\sum_{s=1}^{t}\|\phi_s^t\|=O(t^{r})$. 
		\end{itemize}
		We comment that P3 is not provided in \cite{polyak1992acceleration}. We establish it in Lemma \ref{lm:sgd_de}. 
		
		We are now ready to establish the functional level convergence results for each part of the decomposition.
		For  $J^{(1)}$, we have 
		\begin{eqnarray*}
			&&\sup_{0\leq t\leq T}\|tn^{1/2}J^{(1)}(nt)\|\\
			&\leq& \sup_{0\leq t\leq T}  n^{-1/2} \sum^{nt-1}_{s=1}\|(\nabla^2 H(x^{*}) + \phi^t_s) (\nabla H(X_s) - \nabla^2 H(x^{*}) \Delta_s)\| \\
			&\leq& (\|\nabla^2 H(x^{*})\| + K) \frac{C}{2} \sup_{0\leq t\leq T} n^{-1/2} \sum^{nt-1}_{i=1} \|\Delta_i\|^2 \mbox{ by P2 and Assumption \ref{ass:main1} }\\
			&\leq& \frac{C}{2}(\|\nabla^2 H(x^{*})\| + K)\T^{1/2}((nT)^{1/2})^{-1} \sum^{nT-1}_{i=1} \|\Delta_i\|^2 \to 0 \mbox{ a.s. as $n\rightarrow\infty$ by P1.}
		\end{eqnarray*}
		Thus, $tn^{1/2}J^{(1)}(nt)\Rightarrow 0$ in $D(0,\infty)$ as $n\rightarrow \infty$.
		
		For $J^{(2)}$,  
		let 
		$\M_n(t):=n^{-1/2}\sum_{s=1}^{nt} \xi_s.$
		We next establish the functional central limit theorem (FCLT) for $M_n$, i.e. there exists a matrix $U$ such that
		\begin{equation} \label{eq:fcltm}
		\M_n(t)\Rightarrow UB(t) \mbox{ in $D(0,\infty)$ as $n\rightarrow\infty$.}
		\end{equation}
		Under Assumption \ref{ass:main2}, $\xi_t$'s form a Martingale-difference sequence.
		Following Theorem 8.1 in \cite{whittproofs}, we only need to verify the following two conditions:\\
		\begin{itemize}
			\item[C1)] For each $t>0$, $\lim_{n \to \infty} E [J(\M_n, t)] =0,$
			where $J$ is the maximum jump function, i.e. $J(x,t):= \sup \{ \|x(s)-x(s-)\| : 0<s\leq t\}$.
			\item[C2)] For each $(i,j)$, $1\leq i,j \leq d$, there exits a constant $U_{ij}$, such that 
			$\M_{n,i}, \M_{n,j}](t)\Rightarrow U_{ij} t$ as $n\rightarrow \infty$, 
			where $M_{n,i}$ denotes $i$-th entry of $\M_n$, and $[\M_{n,i}, \M_{n,j}]$ is the square-bracket process.
		\end{itemize}
		For C1), under the boundedness condition of the Martingale differences (Assumption \ref{ass:main2}), 
		\[ E [J(\M_{n}, T)] =
		E \left[n^{-1/2} \sup_{0< s\leq Tn} \|\xi_s\|\right]
		\leq \frac{M}{n} \rightarrow 0 \mbox{ as $n\rightarrow \infty$.}\]
		
		For C2), we have	
		\[\begin{split}
		[\M_{n i}, \M_{n j}] (t) 
		&= t/(nt) \sum^{nt}_{s=1} \xi_{si} \xi_{sj}\\
		&= \underbrace{t/(nt) \sum^{nt}_{s=1} (\xi_{si}\xi_{sj}- E[\xi_{si} \xi_{sj}|\mathcal{F}_{s-1}])}_{(a)}+\underbrace{t/(nt) \sum^{tn}_{s=1} E[\xi_{si} \xi_{sj}|\mathcal{F}_{s-1}]}_{(b)}.
		\end{split}\]
		
		Under Assumption \ref{ass:main2}, (a) is again a martingale.
		We can thus apply martingale law of large number \cite{csorgHo1968strong}, i.e.,
		$ (nt)^{-1} \sum^{nt}_{s=1} (\xi_{si}\xi_{sj}- E[\xi_{si} \xi_{sj}])\Rightarrow0$  as $n\rightarrow\infty$. 
		For (b), under Assumption \ref{ass:main2}, we have
		$t/(nt)\sum^{tn}_{s=1} E[\xi_{si} \xi_{sj}|\mathcal{F}_{s-1}] \Rightarrow U_{ij}t$ as $n\rightarrow\infty$.

		Based on \eqref{eq:fcltm}, we have for $G=\nabla^2 H(x^{*})^{-1}U$,	
		\[ tn^{1/2}J^{(2)}(nt)  \Rightarrow G B(t) \mbox{in $D(0,\infty)$ as $n\to \infty$}.\]	
		
		For $J^{(3)}$,  by Assumption \ref{ass:main2}, we have for any $\delta>0$ and $n$ large enough,
		\begin{eqnarray*}
			&&\textnormal{pr}\left(\sup_{1\leq t \leq nT} \left\| n^{-1/2} \sum^t_{i=1} \phi^t_i \xi_i \right\| \geq \delta\right)\\
			&=&\textnormal{pr}\left(\sup_{1\leq t \leq nT} \left\| (2MK)^{-1}\sum^t_{i=1} \phi^t_i \xi_i \right\| \geq n^{1/2}\delta/(2MK)\right)\\
			&\leq& \sum^{nT}_{t=1} 2\exp\left(1-(n^{1/2}\delta/(2MK) -1)^2/(2 \sum^t_{s=1} M^2/(2MK)^2\|\phi^t_s\|^2)\right) \mbox{ by Lemma \ref{boundmartdiff}}\\
			&\leq& \sum^{nT}_{t=1} \exp\left(1-2K(\delta/(2MK) -n^{-1/2})^2/(n^{-1}\sum^t_{s=1}\|\phi^t_s\|)\right)\\
			&\leq& \sum^{nT}_{t=1} 2\exp\left(1-2K(\delta/(2MK) -n^{-1/2})^2/(n^{-1}C^{\prime} t^{r})\right) \mbox{ for some $C^{\prime}>0$ by P3}\\	
			&\leq& 2nT\exp\left(1-2K(\delta/(2MK) -n^{-1/2})^2/(C^{\prime} T^{r}) n^{1-r}\right)					
			\rightarrow 0 \mbox{ as $n\rightarrow \infty$.} 
		\end{eqnarray*}
		Thus, $tn^{1/2}J^{(3)}(nt)\Rightarrow 0$ in $D(0,\infty)$ as $n\rightarrow \infty$.	
	\end{proof}
	We note from Theorem \ref{th:fcll} that if we fix $t=1$, then we have $n^{1/2} (\bar X_{n}-x^*)\Rightarrow N(0,G)$ as $n\to\infty$,
	i.e., the FCLT result we established is stronger than the large sample central limit theorem.
	We also comment that FCLT is required for batch means and a more general class of cancellation methods known as the standardized
	time series \cite{Glynn:1990}.

	We next carry out the second step. For the batch means method to be valid, we require that the number of batches $m\geq d+1$. 
	This is because when $m\leq d$, the estimated covariance matrix, $S_m(T)$, is likely to be degenerate. Specifically, 
	from Theorem \ref{th:fcll}, we have that for any $m\in \Z^+$,
	$TS_m(T) \Rightarrow Gg_m(B,w)G^T \mbox{ as $T\rightarrow\infty$}.$
	The following lemma characterizes the behavior of $g_m(B,w)$ for different values of $m$, including when $m\leq d$.
	\begin{lemma} \label{lm:h}
		For $w>0$, when $m\geq d+1$, $g_m(B,w)$ is positive definite with probability 1;
		when $m\leq d$, $g_m(B,w)$ is degenerate with probability 1. 
	\end{lemma}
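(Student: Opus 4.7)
The plan is to express $g_m(B, w) = (m-1)^{-1} Y Y^T$ for an explicit $d \times m$ matrix $Y$ and then analyze its rank. Letting $\eta_j := (B(c_j) - B(c_{j-1}))/\sqrt{w_j}$, which are i.i.d.\ $N(0, I_d)$, a direct rearrangement gives
\[Y_i = \frac{B(c_i) - B(c_{i-1})}{w_i} - B(1) = \sum_{j=1}^{m}\left(\frac{\mathds{1}(i=j)}{\sqrt{w_i}} - \sqrt{w_j}\right)\eta_j,\]
so that $Y = \Psi M$ where $\Psi := [\eta_1, \ldots, \eta_m]$ is a $d \times m$ matrix of i.i.d.\ standard Gaussian entries and $M := \mathrm{diag}(w_1^{-1/2}, \ldots, w_m^{-1/2}) - \sqrt{w}\,\mathbf{1}^T$ with $\sqrt{w} := (\sqrt{w_1}, \ldots, \sqrt{w_m})^T$.

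Next, I would verify the rank structure of $M$. A short calculation using $\sum_i w_i = 1$ (which holds because $c_m = 1$) shows that $\sqrt{w}^T M = 0$. Combined with the fact that $M$ is a rank-one perturbation of an invertible diagonal matrix, this forces $\mathrm{rank}(M) = m - 1$, with left null space exactly $\mathrm{span}(\sqrt{w})$. The two cases of the lemma then follow from rank bounds on $Y = \Psi M$.

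For $m \leq d$, no probabilistic argument is needed: $\mathrm{rank}(Y) \leq \mathrm{rank}(M) = m - 1 < d$, so $Y Y^T$ is singular and $g_m(B, w)$ is degenerate. For $m \geq d + 1$, I want to show $\mathrm{rank}(Y) = d$ almost surely, equivalently that no nonzero $v \in \R^d$ satisfies $v^T Y = 0$. Since $v^T Y = (v^T \Psi) M$, this vanishes precisely when $\Psi^T v = c\sqrt{w}$ for some $c \in \R$. If $c = 0$, then $v = 0$ because $\Psi^T$ is injective whenever $\Psi$ has rank $d$, which holds almost surely when $m \geq d$. If $c \neq 0$, then $\sqrt{w}$ must lie in the column space of $\Psi^T$, a random $d$-dimensional subspace of $\R^m$.

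The main obstacle is ruling out this last event. I would show that for any fixed nonzero $u \in \R^m$ with $d < m$, the column space of $\Psi^T$ contains $u$ with probability zero. The condition $u \in \mathrm{Col}(\Psi^T)$ is equivalent to the $(d+1) \times m$ augmented matrix obtained by appending $u^T$ to $\Psi$ having rank at most $d$, which is cut out by the simultaneous vanishing of its $(d+1) \times (d+1)$ minors. Since $u \neq 0$, these are non-trivial polynomial equations in the entries of $\Psi$, so the event has Lebesgue measure zero in $\R^{d \times m}$; absolute continuity of the i.i.d.\ Gaussian law of $\Psi$ then gives probability zero. Combining the two subcases yields positive definiteness of $g_m(B, w)$ in the case $m \geq d + 1$.
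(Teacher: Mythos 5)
Your proof is correct, and it rests on the same factorization as the paper's: writing $g_m(B,w)=(m-1)^{-1}\Psi M M^T\Psi^T$ with $\Psi$ a $d\times m$ standard Gaussian matrix and $M$ a deterministic $m\times m$ matrix of rank $m-1$ is exactly the paper's $g_m(B,w)=(m-1)^{-1}NVN^T$ with $V=\sum_i r_ir_i^T=MM^T$. Where you diverge is in how the two rank facts are established, and in both places your route is more elementary. For $\mathrm{rank}(M)=m-1$, the paper bounds the rank from below by showing the upper-left $(m-1)\times(m-1)$ block of $V$ is positive definite, via a decomposition into a positive semidefinite part plus a positive diagonal part; your observation that $M$ is a rank-one perturbation of the invertible diagonal matrix $\mathrm{diag}(w_1^{-1/2},\dots,w_m^{-1/2})$, combined with $\sqrt{w}^{\,T}M=0$, reaches the same conclusion (and identifies the left null space exactly) in two lines. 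For the almost-sure nondegeneracy when $m\geq d+1$, the paper diagonalizes $V=P\Lambda P^T$ and uses orthogonal invariance of the Gaussian law to reduce to a $d\times(m-1)$ Gaussian matrix, which has rank $d$ a.s.; you instead characterize $v^TY=0$ as $\Psi^Tv\in\mathrm{span}(\sqrt{w})$ and dispose of the nontrivial case by a minor-vanishing, measure-zero argument for the event that the fixed vector $\sqrt{w}$ lies in the random $d$-dimensional row space of $\Psi$. Both are valid; the paper's rotation trick is slicker once the spectral decomposition is in hand, while yours avoids it entirely and makes explicit which null events must be ruled out. One point worth stating explicitly in your write-up: both $\sqrt{w}^{\,T}M=0$ and the identity $B(1)=\sum_j\sqrt{w_j}\,\eta_j$ rely on $\sum_i w_i=1$ (i.e., $c_m=1$), which the paper also uses implicitly through the normalization of $g_m$ on $C^d[0,1]$.
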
 
		\begin{proof}[Proof of Lemma \ref{lm:h}]
			Recall $w_i = c_i-c_{i-1}$, Let $N_i = w_i^{-1/2}B(c_i)-B(c_{i-1})$, and
			$r_i = \left[-w_1^{1/2},\dots, w_i^{-1/2}-w_i^{1/2}, \dots, -w_m^{1/2}\right]^{T}$,
			for $i=1,2,\dots, m$. Then
			\[g_m(B,w) =(m-1)^{-1} N \left(\sum^m_{i=1} r_i r_i^T\right) N ^T = (m-1)^{-1} N V N ^T,\] 
			where  $N = [N_1, \dots, N_m]$, is a $d\times m $ matrix whose columns are independent and identically distributed $d$-dimensional standard Gaussian random vectors,
			and $V$ is an $m\times m$ matrix with $V_{ii} = 1/w_i-2 + m w_i$ and $V_{ij} = -(w_i/w_j)^{1/2}-(w_j/w_i)^{1/2} + m (w_i w_j)^{1/2}$ for $i\neq j$.
			
			In what follows, we shall prove that $V$ has rank $m-1$. 
			We first note that as
			$\sum^m_{i=1} w_i^{1/2} V_i = 0$, 
			where $V_i$ denotes $i$-th row of $V$,
			$\mbox{rank}(V) \leq m-1$.
			We next look at the `upper-left corner' $(m-1) \times (m-1)$ sub-matrix of $V$, which we denoted as $\tilde V$. 
			We can decomposition $\tilde{V}$ as
			$\tilde{V} = \tilde{V}^1 + \Delta,$
			where
			$ \tilde{V}^1_{ij} = \tilde{V}_{ij}$ for $i\neq j$, and  $ \tilde{V}^1_{ii} = (m-1)/ (m w_i) -2 +m w_i$;
			$\Delta_{ij}=0$ for $i\neq j$, and $\Delta_{ii} = (m w_i)^{-1}>0$. 
			
			Let $\tilde{w}_i = (m-1)w_i/m$, 
			$\tilde{r}_i = \left[-\tilde{w}_1^{1/2},\dots, (\tilde{w}_i)^{-1/2}-(\tilde{w}_i)^{1/2}, \dots, -(\tilde{w}_{m-1})^{1/2}\right]^T$,
			for $i=1,\dots, m-1$. Then we have $\tilde{V}^1 = \sum^{m-1}_{i=1} \tilde{r}_i \tilde{r}_i^T$. This suggests $\tilde{V}_1$ is positive semi-definite.  
			As $\Delta$ is strictly positive definite, $\tilde{V}$ is positive definite. This indicates that $\mbox{rank}(V)\geq m-1$. 
			Thus, $\mbox{rank}(V) = m-1$.

			Now for $m\leq d$, 
			$ \mbox{rank}(g_m(B, w)) \leq \mbox{rank}(V)\leq m-1 < d,$
			Thus, $g_m(B,w)$ is degenerate.
			
			For $m>d$, 
			define $\mathcal{P}(N)=\det(NN^T)$, which is a  polynomial function over entries of $N$. 
			Notice as all entries of $N$ are independent and identically distributed standard Normal random variables, 
			$\{X\in \mathbb{R}^{d \times m} :\mathcal{P}(X) =0\}$ has Lebesgue measure $0$.
			Therefore, $\textnormal{pr}(\det(NN^T) = 0) =0$. 
			This indicates that $N$ has rank $d$ a.s.. 
			Since $V$ is of rank $m-1$ and positive semi-definite, we can decompose it as 
			$V= P\Lambda P^T,$
			where $\Lambda$ is a diagonal matrix with $\Lambda_{ii}>0$ for $i=1,2, \dots, m-1$ and $\Lambda_{mm}=0$, 
			$P$ is an orthogonal matrix. 
			Next, note that $NP\Lambda^{1/2} \overset{d}{=} N\Lambda^{1/2} = [\tilde{N}, 0]$ where 
			$\tilde{N} =[\lambda_1N_1, \dots, \lambda_{m-1}N_{m-1}]$.
			It is easy to see that $\tilde N$ is again a gaussian random matrix with each independent and identically distributed standard Normal elements.
			Then $\mbox{rank}(\tilde{N}) = d$ a.s..
			Therefore, 
			$g_m(B, w)=(m-1)^{-1} N V N ^T $ has the same distribution as  $(m-1)^{-1}\tilde{N} \tilde{N}^T$, having rank $d$ almost surely, i.e., $g_m(B, w)$ is positive definite almost surely.	
		\end{proof}
		
	Now we are ready to prove 
	Theorem \ref{th:main}.
	\begin{proof}[Proof of Theorem \ref{th:main}]
		The proof builds on verifying the conditions for Theorem 1 in \cite{MunozG01}.
		We denote $B$ as a $d$-dimensional Brownian motion.
		We first show that $g_m(x, c)$ satisfies following four properties:
		\begin{enumerate}[label=(\alph*)]
			\item $g_m(Gx, w) = G g_m(x, w) G^T$ for any non-singular $d\times d $ matrix G. 
			\item $g_m(x-\beta \eta, w) = g_m(x, w)$ for $x\in C[0,1]^d$ and $\beta\in R^d$, where $\eta(t) := t$, $0\leq t \leq 1$.
			\item $g_m(B, w)$ is positive definite and symmetric almost surely.
			\item $\textnormal{pr}(B\in D(g_m(\cdot,w))) =0 $ where $D(g_m(\cdot,w))$ is the set of discontinuities of $g_m(\cdot,c)$.
		\end{enumerate}
		For (a), we note that
		\begin{eqnarray*}
			g_m(Gx,w) &=&\frac{1}{m-1} \sum_{i=1}^m \left(\frac{Gx(c_i)-Gx(c_{i-1})}{c_i- c_{i-1}} -Gx(1)\right)\left(\frac{Gx(c_i)-Gx(c_{i-1})}{c_i- c_{i-1}} -Gx(1)\right)^T\\
			&=&	\frac{G}{m-1} \sum_{i=1}^m \left(\frac{x(c_i)-x(c_{i-1})}{c_i- c_{i-1}} -x(1)\right)\left(\frac{x(c_i)-x(c_{i-1})}{c_i- c_{i-1}} -x(1)\right)^T G^T\\
			&=& G g_m(x,w) G^T.
		\end{eqnarray*}
		
		For (b), we have
		\begin{eqnarray*}
			g_m(x-\beta J,w) &=&\frac{1}{m-1} \sum_{i=1}^m \left(\frac{(x-\beta J)(c_i)-(x-\beta J)(c_{i-1})}{c_i- c_{i-1}} -(x-\beta J)(1)\right) \\
			&&\left(\frac{(x-\beta J)(c_i)-(x-\beta J)(c_{i-1})}{c_i- c_{i-1}} -(x-\beta J)(1)\right)^T\\
			&=&	\frac{1}{m-1} \sum_{i=1}^m \left(\frac{x(c_i)-x(c_{i-1})}{c_i- c_{i-1}} -x(1)\right) \left(\frac{x(c_i)-x(c_{i-1})}{c_i- c_{i-1}} -x(1)\right)^T \\
			&=& g_m(x,w).
		\end{eqnarray*}
		
		Lastly, (c) follows from Lemma \ref{lm:h}. Since $g_m(\cdot,w)$ is continuous on $C[0,1]^d$, (d) is also satisfied. 
		
		Let $\bar{Y}_T(u) = T^{-1} \sum^{uT}_{i=1} X_i$, $0\leq u \leq 1$. Note that
		$ S_m(T) = g_m(\bar{Y}_T,w)$.
		From Theorem \ref{th:fcll}, $\bar{Y}_T(u)\Rightarrow GB(t)$ in $D[0,1]$ as $T\rightarrow\infty$.
		Then, from Theorem 1 in \cite{MunozG01}, we have 
		\[\begin{split}
		&\Gamma_T=m(m-d)/(d(m-1))(\bar X_T - x^*)^TS_m^{-1}(T)(\bar X_T - x^*)\\
		\Rightarrow& m(m-d)/(d(m-1)) B^T(1) g_m(B,w)^{-1} B(1) \mbox{ as $T\rightarrow\infty$.}\end{split}\]
		Moreover, we note that
		\[\frac{B(c_i)-B(c_{i-1})}{c_i- c_{i-1}} -B(1) = \frac{1}{c_i-c_{i-1}}\left( B(c_i) - c_i B(1) -(B(c_{i-1})-c_{i-1}B(1))\right).\]
		As $B(u)-uB(1)$, $0\leq u\leq 1$, is independent of $B(1)$, $g_m(B,w)$ independent of $B(1)$.
	\end{proof}

	\bibliographystyle{plain}
	\bibliography{batch_mean_ref}

\begin{thebibliography}{10}

\bibitem{agarwal2009information}
A.~Agarwal, M.~J. Wainwright, P.~L. Bartlett, and P.K. Ravikumar.
\newblock Information-theoretic lower bounds on the oracle complexity of convex
  optimization.
\newblock In {\em Advances in Neural Information Processing Systems}, pages
  1--9, 2009.

\bibitem{Chen:2018}
X.~Chen, J.~D. Lee, X.~T. Tong, and Y~Zhang.
\newblock Statistical inference for model parameters in stochastic gradient
  descent.
\newblock https://arxiv.org/pdf/1610.08637.pdf, 2018.

\bibitem{csorgHo1968strong}
Mikl{\'o}s Cs{\"o}rg{\H{o}}.
\newblock On the strong law of large numbers and the central limit theorem for
  martingales.
\newblock {\em Transactions of the American Mathematical Society},
  131(1):259--275, 1968.

\bibitem{Fang:2017}
Y.~Fang, J.~Xu, and L.~Yang.
\newblock On scalable inference with stochastic gradient descent.
\newblock arXiv preprint arXiv:1707.00192, 2017.

\bibitem{Glynn:1990}
P.W. Glynn and D.L. Iglehart.
\newblock Simulation output analysis using standardized time series.
\newblock {\em Mathematics of Operations Research}, 15(1):1--16, 1990.

\bibitem{ThomaHayes}
T.P. Hayes.
\newblock A large-deviation inequality for vector-valued martingales.
\newblock available at https://www.cs.unm.edu/~hayes/papers/VectorAzuma/, 2005.

\bibitem{Glynn:2002}
M.~Hsieh and P.W. Glynn.
\newblock Confidence region for stochastic approximation algorithms.
\newblock In {\em Winter Simulation Conference}, 2002.

\bibitem{Kingma:2015}
D.P. Kingma and J.L. Ba.
\newblock {ADAM}: A method for stochastic optimization.
\newblock In {\em ICLR}, 2015.

\bibitem{MunozG01}
D.F. Munoz and P.W. Glynn.
\newblock Multivaraite standardized time series for steady-state simulation
  output analysis.
\newblock {\em Operations Research}, 49(3):413--422, 2001.

\bibitem{Nemirovski:2009}
A.~Nemirovski, A.~Juditsky, G.~Lan, and A.~Shapiro.
\newblock Robust stochastic approximation approach to stochastic programming.
\newblock {\em SIAM Journal on Optimization}, 19(4):1574--1609, 2009.

\bibitem{whittproofs}
G.~Pang, R.~Talreja, and W.~Whitt.
\newblock Martingale proofs of many-server heavy-traffic limits for markovian
  queues.
\newblock {\em Probability Surveys}, 4:193--267, 2007.

\bibitem{polyak1992acceleration}
B.~T. Polyak and A.~B. Juditsky.
\newblock Acceleration of stochastic approximation by averaging.
\newblock {\em SIAM Journal on Control and Optimization}, 30(4):838--855, 1992.

\bibitem{RobbinMonros:1951}
H.~Robbins and S.~Monro.
\newblock A stochastic approximation method.
\newblock {\em The Annals of Mathematical Statistics}, 22(3):400--407, 1951.

\bibitem{schruben1983confidence}
L.~Schruben.
\newblock Confidence interval estimation using standardized time series.
\newblock {\em Operations Research}, 31(6):1090--1108, 1983.

\bibitem{Su:2018}
W.~J. Su and Y.~Zhu.
\newblock Uncertainty quantification for online learning and stochastic
  approximation via hierarchical incremental gradient descent.
\newblock arXiv preprint arXiv:1802.04876, 2018.

\bibitem{Toulis:2017}
P.~Toulis and E.~M. Airoldi.
\newblock Asymptotic and finite-sample properties of estimators based on
  stochastic gradients.
\newblock {\em The Annals of Statistics}, 45(4):1694--1727, 2017.

\bibitem{zhang2004solving}
T.~Zhang.
\newblock Solving large scale linear prediction problems using stochastic
  gradient descent algorithms.
\newblock In {\em Proceedings of the twenty-first international conference on
  Machine learning}, page 116. ACM, 2004.

\end{thebibliography}

\end{document}